	\newtheorem{assumption}{Assumption}
	\newtheorem{lemma}{Lemma}
	\newtheorem{theorem}{Theorem}
	\newtheorem{corollary}{Corollary}
	\newtheorem{remark}{Remark}
\crefname{figure}{fig.}{figs.}%
\crefname{assumption}{assumption}{assumptions}
\crefname{lemma}{lemma}{lemmata}
\crefname{theorem}{theorem}{theorems}
\crefname{definition}{definition}{definitions}
\crefname{proposition}{proposition}{propositions}
\Crefname{figure}{Fig.}{Figs.}%
\Crefname{Assumption}{Assumption}{Assumptions}
\Crefname{Lemma}{Lemma}{Lemmata}
\Crefname{Theorem}{Theorem}{Theorems}
\Crefname{Definition}{Definition}{Definitions}
\Crefname{Proposition}{Proposition}{Propositions}
\Crefname{Corollary}{Corollary}{Corollaries}
\Crefname{Algorithm}{Algorithm}{Algorithm}
 \DeclareRobustCommand{\qed}{%
 	\ifmmode 
 	\else \leavevmode\unskip\penalty9999 \hbox{}\nobreak\hfill
 	\fi
 	\quad\hbox{\qedsymbol}}
 \newcommand{\openbox}{\leavevmode
 	\hbox to.77778em{%
 		\hfil\vrule
 		\vbox to.675em{\hrule width.6em\vfil\hrule}%
 		\vrule\hfil}}
 \newcommand{\qedsymbol}{\openbox}
 \newenvironment{proof}[1][\proofname]{\par
 	\normalfont
 	\topsep6\p@\@plus6\p@ \trivlist
 	\item[\hskip\labelsep\itshape
 	#1:]\ignorespaces
 }{%
 	\qed\endtrivlist
 }
 \newcommand{\proofname}{Proof}
\newcommand\fs@spaceruled{\def\@fs@cfont{\bfseries}\let\@fs@capt\floatc@ruled
	\def\@fs@pre{\vspace{5\baselineskip}\hrule height.8pt depth0pt \kern2pt}%
	\def\@fs@post{\kern2pt\hrule\relax}%
	\def\@fs@mid{\kern2pt\hrule\kern2pt}%
	\let\@fs@iftopcapt\iftrue}
\def\@fs@pre{\vspace{5\baselineskip}\hrule height.8pt depth0pt \kern2pt}}
\def\@fs@post{\kern2pt\hrule\relax}}
\def\@fs@post{\kern2pt\hrule height 0pt depth .8pt\relax}}
\newcommand\xaug{{\tilde{\bm x}}}
\newcommand{\x}{{\bm x}}
\newcommand{\StatSpAug}{{\tilde{\StatSp}}}
\newcommand{\y}{\bm y}
\newcommand{\xsc}{x}
\newcommand{\gsc}{g}
\newcommand{\D}{{\mathcal{D}}}
\newcommand{\K}{{\bm{K}}}
\newcommand{\f}{{\bm f}}
\newcommand{\fsc}{f}
\newcommand{\g}{{\bm{g}}}
\newcommand{\w}{{\bm{w}}} 
\newcommand{\wsc}{{w}} 
\newcommand{\transp}{^{\text{T}}}
\newcommand{\Id}{\bm{I}}
\newcommand{\NMPC}{T}
\newcommand{\dimx}{{N_x}} 
\newcommand{\dimu}{{N_u}}
\newcommand{\uin}{{\bm u}}
\newcommand{\uinsc}{u}
\newcommand{\xaugsc}{\tilde{x}}
\newcommand{\StatSp}{\mathcal{X}} 
\newcommand{\kernel}{k}
\newcommand{\mean}{m}
\newcommand{\postmean}{\bm{\mu}}
\newcommand{\postmeansc}{\mu}
\newcommand{\step}{t}
\newcommand{\InputSp}{{\mathcal{U}}}
\newcommand{\ParamSp}{{\bm{\varTheta}}}
\newcommand{\nMC}{{M}}
\newcommand{\C}{C}
\newcommand{\Cintegrand}{\sum_{\step=0}^\NMPC c_{\step}}
\newcommand{\covtoyproblem}{0.01}
\newcommand{\nMCtoyprob}{100}
\newcommand{\xdestoyprob}{4\sin( \step/2\pi)}
\newcommand{\nHtoyprob}{{150}}
\newcommand{\gsctoyprxsc}{0.85\sin(12 \xsc_{\step})+\xsc_{\step}^2(\exp(-0.2\xsc_{\step}^2))}
\DeclarePairedDelimiterX\PBasics[1](){ #1}
\DeclarePairedDelimiterX\EBasics[1][]{ #1}
\title{\LARGE \bf
	Anticipating the long-term effect of online learning in control
}
\author{Alexandre Capone
	and Sandra Hirche 
	\thanks{This work was supported by the ERC Starting Grant “Control based on Human Models” under grant agreement no. 337654.}
	\thanks{The authors are with the Department of Electrical and Computer
		Engineering, Technical University of Munich, 80333 Munich,
		Germany (e-mail: alexandre.capone@tum.de; hirche@tum.de). }
}
\begin{document}

	\maketitle
	\thispagestyle{empty}
	\pagestyle{empty}

	\begin{abstract}
		Control schemes that learn using measurement data collected online are increasingly promising for the control of complex and uncertain systems. However, in most approaches of this kind, learning is viewed as a side effect that passively improves control performance, e.g., by updating a model of the system dynamics. Determining how improvements in control performance due to learning can be actively exploited in the control synthesis is still an open research question. In this paper, we present AntLer, a design algorithm for learning-based control laws that anticipates learning, i.e., that takes the impact of future learning in uncertain dynamic settings explicitly into account. AntLer expresses system uncertainty using a non-parametric probabilistic model. Given a cost function that measures control performance, AntLer chooses the control parameters such that the expected cost of the closed-loop system is minimized approximately. We show that AntLer approximates an optimal solution arbitrarily accurately with probability one. Furthermore, we apply AntLer to a nonlinear system, which yields better results compared to the case where learning is not anticipated.
	\end{abstract}
	
	\section{Introduction}
	\label{sect:Introduction}
	Control design often requires an accurate model of the system dynamics. However, obtaining a mathematical model is often prohibitive due to system intricacy or lack of expertise. Moreover, erroneously assuming that a model is correct can lead to poor control performance. These issues have been increasingly addressed by employing online learning-based strategies, i.e., algorithms that employ system measurements collected online to improve control performance. This is typically achieved either by learning a model of the system, e.g., with Bayesian modeling tools~\cite{klenske2016gaussian, murray2002nonlinear, kamthe2017data, koller2018learning, umlauft2019feedback, chowdhary2015bayesian,nguyen2011model}, or by directly learning the optimal control law, e.g., by applying online reinforcement learning~\cite{bakker2006quasionline}.  Despite belonging to the broader category of adaptive control, the intricacy of online learning-based control algorithms often does not allow a formal assessment of the resulting control performance, as opposed to many classical adaptive control strategies~\cite{astrom1994adaptive,Krstic1995}.
	
	Even though online learning-based approaches adapt over time using measurement data, they often include parameters that are \textit{data-independent}, i.e., parameters that are fixed a priori and do not depend on the collected data. Examples include control gains~\cite{kocijan2016modelling,chowdhary2015bayesian,klenske2016gaussian} and safety-relevant parameters~\cite{koller2018learning,berkenkamp2015safe}. Most of these methods choose the data-independent parameters such that system safety and stability is guaranteed after an arbitrary model update~\cite{koller2018learning,berkenkamp2015safe,chowdhary2015bayesian}, while others omit guarantees altogether~\cite{bakker2006quasionline,klenske2016gaussian,kamthe2017data,murray2002nonlinear,kocijan2016modelling,nguyen2011model}. Hence, although learning is an integral part of the control loop, much the same as the control law itself, it only improves the control law in a passive fashion. In other words, the control law is not designed with future learning in mind. This can cause the control to be overly conservative, leading to excessively costly state trajectories.  
	
	Efficiently choosing data-independent parameters in a learning-based setting requires accurately assessing how the control law will perform, which is generally achieved by leveraging any prior knowledge about the system. To this end, we introduce a novel algorithm for optimizing data-independent parameters that quantifies how system uncertainty is expected to be reduced over time due to learning. In other words, the proposed algorithm anticipates the impact that online learning will have on future control performance.
	
	Within the control community, the idea of anticipating and exploiting learning effects in control design has been explored in the form of dual control~\cite{dayan1996exploration,bar1974dual}. So far, dual control has been investigated mostly within the context of structured models with parametric uncertainties, with few exceptions~\cite{klenske2016dual,kral2014gaussian}. However,~\cite{kral2014gaussian} requires the true system to be affine in the control, and both~\cite{kral2014gaussian} and~\cite{klenske2016dual} employ approximations that yield no theoretical guarantees. Hence, developing a general method that provably approximates data-independent parameters arbitrarily accurately remains an open research question.
	
	In this paper, we present AntLer (anticipating learning), a sampling-based algorithm that approximates optimal data-independent parameters of online learning-based control laws in uncertain settings. Our approach accounts for a broad class of model uncertainties by using a probabilistic Gaussian process model. Given a cost function that quantifies control performance over a finite-time horizon, AntLer is able to express the expected cost for an online learning-based control law. Minimizing the resulting expression with respect to the control law's data-independent parameters corresponds to a stochastic optimal control problem, which AntLer solves approximately using sample average approximation. AntLer is applicable to a wide class of dynamical systems that include an additive uncertainty, as well as process noise. We show that, under reasonable assumptions, AntLer approximates the optimal solution arbitrarily accurately given a large enough number of samples. 
	
	The remainder of this paper is organized as follows. \Cref{sect:ProblemSetting} describes the general problem setting and the assumptions used in this paper. In \Cref{sect:GaussianProcesses} the probabilistic approach used to quantify model uncertainty is discussed. \Cref{section:SAA} contains our main result. Therein, we introduce the AntLer algorithm and provide a corresponding theoretical analysis. In \Cref{section:Example} AntLer is applied to a numerical system. We then provide some concluding remarks in \Cref{sect:Conclusion}.
	\paragraph*{Notation}
	Let~$\mathbb{N}$ denote the natural numbers,~$\mathbb{R}$ the real numbers, and~$\mathbb{R_+}$  the non-negative real numbers. We employ bold lowercase and uppercase letters to denote vectors and matrices, respectively. For~$\mu, \sigma \in \mathbb{R_+}$, a normal distribution with mean~$\mu$ and variance~$\sigma^2$ is denoted as~$\mathcal{N}(\mu,\sigma^2)$. For~$d \in \mathbb{N}$, we denote the space of continuously differentiable functions on~$\mathbb{R}^d$ as~$\mathcal{C}^{1}(\mathbb{R}^d)$, and the~$d$-dimensional identity matrix as~$\Id_{d}$. Moreover, for matrices~${\bm{A},\bm{B} \in \mathbb{R}^{d\times d}}$, we use~$[\bm{A},\bm{B}]$ to denote the horizontal concatenation of~${\bm{A}}$ followed by~$\bm{B}$. The entry in the~$i$-th row and~$j$-th column of~${\bm{A}}$ is denoted by~$[\bm{A}]_{ij}$. The symbol~$\cup$ denotes the union of two sets. We use~$\text{E}_{\bm{a}_1, \dotsc, \bm{a}_d}[ \ \cdot \ ]$ to denote the expected value operator with respect to the probability distribution of the random variables~${\bm{a}_1, \dotsc, \bm{a}_d} \in \mathbb{R}^d$.
	\section{Problem Statement}
	\label{sect:ProblemSetting}
	We consider a 
	discrete-time system of the form
	\begin{align}
	\label{eq:SystemDynamics}	
	\begin{split}
	{\x_{\step+1}}  &= \f(\x_{\step},\uin_{\step}) + \g(\x_{\step},\uin_{\step}) + \w_\step \\
	&\eqqcolon \f(\xaug_{\step}) + \g(\xaug_{\step}) + \w_\step
	\end{split}	\end{align}
	where~${\x_{\step} \in \StatSp \subseteq \mathbb{R}^{\dimx}}$ and~${\mathbf{u}_{\step} \in \mathcal{U} \subseteq \mathbb{R}^{\dimu}}$ are the system's state vector and control vector at the~$\step$-th time step, respectively. The initial state~$\x_0 \in \StatSp$ is assumed to be fixed and known. The vector of augmented states~${\xaug_{\step}:= (\x_{\step}, \uin_{\step}) \in \StatSpAug}$, where~${\StatSpAug:=\StatSp \times \mathcal{U}}$, concatenates the state vector~$\x_{\step}$ and the vector of control inputs~$\uin_{\step}$, and is henceforth employed for the sake of simplicity. The system is disturbed by multivariate normally distributed process noise~$\w_\step \sim \mathcal{N}(\bm{0}, \bm{\Sigma}_w^2)$. Here~${\bm{\Sigma}_w = \text{diag}( \sigma_{\wsc_1}, \dotsc, \sigma_{\wsc_\dimx})}$ is a nonnegative diagonal matrix, which we assume to know. The function~${\f \in \mathcal{C}^{1}(\StatSpAug)}$, corresponds to the prior model of the system dynamics, whereas~${\g\in \mathcal{C}^{1}(\StatSpAug)}$ is unknown and is assumed to be drawn from a Gaussian process (GP). This is described thoroughly in \Cref{sect:GaussianProcesses}.

	\begin{remark}
		In this paper, we assume that~$\x_0$ is fixed and known solely to avoid cumbersome notation. The algorithm proposed in this work extends straightforwardly to the more general case where only the probability distribution of~$\x_0$ is known.
	\end{remark}
	
	 \begin{remark}
		This constellation can be assumed for a wide variety of settings. For example, if no prior system knowledge is available, then this is reflected by choosing~$\f(\x_{\step},\uin_{\step}) = \x_{\step}$.
	\end{remark} 

	
	We assume that a parametric online learning-based control law of the form~${\uin: \Gamma \times \ParamSp \times \StatSp \mapsto \InputSp}$ is employed to control~\eqref{eq:SystemDynamics}, where~$\ParamSp$ denotes the space of data-independent control parameters, and~$\Gamma := \{ \left\{\xaug_0, \dotsc, \xaug_{\step} \right\}\in \StatSpAug^\step \ \vert \ \step \in \mathbb{N} \}$ is the set of all finite subsets of~$\StatSpAug$. 
	At every time step, the control law~$\uin(\cdot,\cdot,\cdot)$ takes as arguments the system measurement data~$\D_{\step} = \left\{\xaug_0, \dotsc, \xaug_{\step-1}\right\} \in \Gamma$ collected up to time step~$\step$, the data-independent control parameters~$\bm{\vartheta} \in \ParamSp$, and the current state~$\x_{\step}$. The collected data~$\D_\step$ is employed to update the control law at every time step, e.g., by learning a model of the system. The control parameters~$\bm{\vartheta}$ correspond to the data-independent components of the control law, e.g., multiplicative scalars used to scale confidence regions and thereby guarantee operational safety~\cite{Berkenkamp2016}, or linear feedback gains~\cite{chowdhary2015bayesian}. This formulation encompasses most discrete-time online learning-based control strategies. We henceforth write~${\uin_{\step}(\bm{\vartheta}):=\uin(\D_\step, \bm{\vartheta}, \x_{\step})}$ to denote the online learning-based control law at time step~$t$. 
	
	\begin{remark}
		In order to anticipate the effect of online learning, we aim to predict which data set~$\D_\step$ will be collected over time and how it will affect the overall control performance. As a baseline, we consider the case where predictions are carried out without anticipating learning, which amounts to predicting the closed-loop behavior under the data-independent counterpart~$\uin^0_{\step}(\bm{\vartheta}):=\uin(\D_0, \bm{\vartheta}, \x_{\step})$. Here we assume~$\D_0 \coloneqq \left\{\right\}$ without loss of generality. In \Cref{section:Example}, we compare predictions made with both control laws using a simple example.
	\end{remark}
	
	\begin{remark}
		The method presented in this paper extends straightforwardly to a setting where the system measurements~$\D_\step$ used to update the control law are corrupted by normally distributed observation noise. However, for notational convenience, we focus solely on the case without observation noise.
	\end{remark} 
	
	Our goal is to minimize a finite horizon cost function
	\begin{align}
	\label{eq:CostFun}
	\C (\bm{\vartheta}) \coloneqq \text{E}_{\xaug_1, \dotsc, \xaug_{\NMPC}}\left[{\sum \limits_{{\step}=0}^{\NMPC} c_\step(\x_{\step},\uin_{\step}(\bm{\vartheta}))}\right]
	\end{align}
	over the data-independent control parameters~$\bm{\vartheta}$, where~${c_t: \StatSpAug \mapsto \mathbb{R}_{+}}$ are continuously differentiable functions that express the immediate cost. The probability distribution of~$\xaug_1, \dotsc, \xaug_{\NMPC}$ captures both the effect of process noise~$\w_\step$, as well as the model uncertainty~$\g(\cdot)$. This is discussed in \Cref{sect:GaussianProcesses}. We denote the minimal value of~\eqref{eq:CostFun} as~$\C^* := \min_{\bm{\vartheta} \in \ParamSp} \C(\bm{\vartheta})$ and the corresponding set of minimizing parameters as~${\ParamSp^* := \left\{ \bm{\vartheta} \in \bm{\varTheta} \ \vert \  \C(\bm{\vartheta}) = \C^*\right\}}$. 
	
	If no assumptions about the online learning-based control law~$\uin_{\step}(\cdot)$ are made, then it is generally impossible to reliably predict the closed-loop behavior of~\eqref{eq:SystemDynamics}. Hence, we need to impose some restrictions on the type of control law considered. 
	 
	\begin{assumption}
		\label{assumption:GrowthofCosts}
		There exists a compact subset~$\tilde{\ParamSp} \subseteq \ParamSp$, such that~${\sum_{{\step}=0}^{\NMPC} c_\step(\x_{\step},\uin_{\step}(\bm{\vartheta}))} > \C^*$ holds for all~${\bm{\vartheta} \in \ParamSp \setminus \tilde{\ParamSp}}$ and arbitrary~$\x_0, \dotsc, \x_{\NMPC} \in \StatSp$. 
	\end{assumption}

	\Cref{assumption:GrowthofCosts} is less restrictive than assuming that~$\ParamSp$ is compact, which is often the case in learning-based applications, e.g., in settings where safety-relevant constraints are an issue~\cite{berkenkamp2016safe,neumann2019data}. 
	 Furthermore, \Cref{assumption:GrowthofCosts} does not impose strong limitations in practice, as~$\tilde{\ParamSp}$ may still be very large.
	
%
	
	  In order to be able to find a minimizer~$\bm{\vartheta}^* \in \ParamSp^*$ of~\eqref{eq:CostFun}, we additionally require the control law~$\uin(\cdot,\cdot,\cdot)$ to satisfy some regularity conditions. In this paper, we restrict the control law to the broad and practically relevant class of continuously differentiable functions, as described in the following.
	\begin{assumption}
		\label{assumption:uiscontinuous}
		The control law~$\uin(\D_\step, \bm{\vartheta}, \x_{\step})$ is continuously differentiable with respect to its arguments, where continuous differentiability with respect to the data is defined as follows. For every fixed~$\D \in \Gamma$,~$\x_{\step} \in \StatSp$ and~$\bm{\vartheta} \in \ParamSp$, the function
		\begin{align*}
		\uin_{\D, \bm{\vartheta}, \x_{\step}}(\xaug)\coloneqq \uin(\D \cup \xaug, \bm{\vartheta}, \x_{\step})
		\end{align*}
		is continuously differentiable with respect to~$\xaug$ for all~$\xaug\in \StatSpAug$. 
	\end{assumption}

	Many commonplace control laws are continuously differentiable with respect to the state~$\x_{\step}$ and parameters~$\bm{\vartheta} \in \ParamSp$, e.g., linear feedback gains and neural networks. Furthermore, control update rules are often continuously differentiable with respect to the data, e.g., if a model of the system is learned online \cite{umlauft2019feedback}.
	
	\section{Probabilistic system model}
	\label{sect:GaussianProcesses}
	In this section, we provide a brief introduction to GPs, and describe how we use them to capture model uncertainty and predict control performance.
	\subsection{Predictions using Gaussian processes}
	In order to assess how the learning-based control law will perform in an uncertain environment, we require a probabilistic model that expresses model uncertainty given prior system measurements. To this end, we model~\eqref{eq:SystemDynamics} using a \textit{Gaussian process} (GP), a probabilistic modeling tool that captures model uncertainty. We opt to employ GPs in this work because they often exhibit good generalization behavior in practice. However, we note that other probabilistic modeling frameworks can be employed, e.g., Bayesian neural networks. 
	
	We introduce GPs for the case where the state is a scalar, i.e.,~$\dimx=1$, and then explain how one-dimensional GPs are extended to the multivariate case. A GP is a collection of dependent random variables, for which any finite subset is jointly normally distributed~\cite{Rasmussen2006}. It is fully specified by a mean function~$\mean: \StatSpAug \mapsto \mathbb{R}$ and a positive definite kernel function~$\kernel: \StatSpAug \times \StatSpAug \mapsto \mathbb{R}$. In this paper, since our prior knowledge is captured by~$\fsc(\cdot)$, we set~$\mean \equiv 0$ without loss of generality~\cite{Rasmussen2006}. The kernel~$\kernel(\cdot,\cdot)$ is a similarity measure for evaluations of~$\gsc(\cdot)$, and encodes function properties such as smoothness and periodicity. Throughout this paper, we assume that the kernel~$k(\cdot,\cdot)$ is continuously differentiable, which reflects the assumption that~$\gsc(\cdot)$ is continuously differentiable~\cite{Rasmussen2006}. Given~$\mean(\cdot)$ and~$\kernel(\cdot,\cdot)$, we denote a GP by~$\mathcal{GP}(\mean,\kernel)$. By modeling an unknown function~$\gsc(\cdot)$ with a GP, we implicitly assume that any finite set of function evaluations~$\y_{\D_{\step}}\coloneqq (\gsc(\xaug_0), \dotsc, \gsc(\xaug_{\step-1}))$ at arbitrary points~$\D_{\step} \coloneqq \{\xaug_0, \dotsc, \xaug_{\step-1} \}$ is jointly normally distributed,
	\begin{align}
	\label{eq:GPDistribution}
		\y_{\D_{\step}} \sim  \mathcal{N}\left(\bm{0},\bm{K}_{D_{\step}}\right), 
	\end{align}
	where the entries of the covariance matrix~$\K_{\D_{\step}}$ are given by~$[\K_{\D_{\step}}]_{ij}=\kernel(\xaug_{i-1},\xaug_{j-1})$,~$i,j =1,\dotsc,\step$.

	Using~\eqref{eq:GPDistribution}, we are able to condition the GP on any measurements taken prior to the control design. In the following, for the sake of notational simplicity, we assume that no prior measurement data is available, and describe how to recursively draw and condition the GP on samples. However, conditioning the GP on system measurement data is identical to conditioning on samples up to an additive term that represents noise covariance~\cite{Rasmussen2006}.
	
	In order to predict the control performance of~$\uin_{\step}(\cdot)$, we aim to draw sample trajectories that satisfy~\eqref{eq:GPDistribution}. We henceforth distinguish sample evaluations of the GP model, which are drawn using~\eqref{eq:GPDistribution}, from evaluations of the true system~\eqref{eq:SystemDynamics} by denoting samples using the superscript~$s$. A sample system trajectory is computed by sequentially sampling from the one-step prediction of the unknown dynamics at time step~$\step$
	\begin{align}
	\label{eq:OneStepPred}
	\begin{split}
	&\gsc^s(\xaug^{s}_{\step}) 
	\sim \mathcal{N} \left( \postmeansc^{s}_{\step} \left(\xaug^{s}_{\step}\right) ,\left(\sigma^{s}_{\step}\left(\xaug^{s}_{\step}\right) \right)^2  \right),
	\end{split}
	\end{align}
	and subsequently computing the next sample state
		\begin{align}
	\label{eq:NextSampleState}
	\begin{split}
	\xaug^{s}_{\step+1} 
	=&\left(\xsc^{s}_{\step+1} , \uinsc_{\step}^s(\bm{\vartheta} )\right) \coloneqq \left(\xsc^{s}_{\step+1} ,\uinsc(\D_{\step}^s,\bm{\vartheta},\xsc^{s}_{\step+1} )\right),\\
		\xsc^{s}_{\step+1} 
	=&\fsc(\xaug^{s}_{\step}) + \gsc^s_{\step}(\xaug^{s}_{\step}) + \wsc_{\step}^s ,
	\end{split}
	\end{align}
where~${\D_{\step}^s\coloneqq\{\xaug^{s}_{0},\dotsc,\xaug^{s}_{\step-1}\}}$ and~$\wsc_{\step}^s\sim \mathcal{N}(0,\sigma_{\wsc})$. Here ${\xaug^{s}_{0} \coloneqq (\x_0,\uin_0(\bm{\vartheta}))}$ is introduced for simplicity of exposition. The mean and variance of~\eqref{eq:OneStepPred} are computed using
\begin{align}
\label{eq:GPMean}
\begin{split}
\postmeansc^{s}_{\step} \left(\xaug^{s}_{\step}\right)\coloneqq &\postmeansc \left(\xaug^{s}_{\step}\vert \D^{s}_{\step},\y_{\D^{s}_{\step}}\right) = \bm{\kernel}^{\text{T}}\left(\xaug^{s}_{\step}\right) \K_{\D^{s}_{\step}}^{-1} \y_{\D^{s}_{\step}}  \transp
\end{split}\\
\label{eq:GPVar}
\begin{split}	
\left(\sigma^{s}_{\step} \left(\xaug^{s}_{\step}\right)\right)^2 \coloneqq & \sigma^2\left(\xaug^{s}_{\step}\vert \D^{s}_{\step},\y_{\D^{s}_{\step}} \right)  \\ =&\kernel\left(\xaug^{s}_{\step},\xaug^{s}_{\step}\right) - \bm{\kernel}\transp\left(\xaug^{s}_\step\right)\K_{\D^{s}_{\step}}^{-1} \bm{\kernel}\left(\xaug^{s}_{\step}\right),
\end{split}
\end{align}
respectively. Here the vector
\begin{align}
\label{eq:GP_y}
\y_{\D_{\step}^s}\coloneqq (\gsc^s(\xaug^s_0), \dotsc, \gsc^s(\xaug^s_{\step-1}))
	\end{align} 
concatenates previously drawn sample states~$\xaug^{s}_i \in \D^{s}_{\step}$, and
\begin{align}
\label{eq:GPKernelVector}
	\bm{\kernel}\left(\xaug^{s}_{\step}\right) = \Big( \kernel(\xaug^{s}_0, \xaug^{s}_{\step}), \ \dotsc, \ \kernel(\xaug^{s}_{\step-1}, \xaug^{s}_{\step}) \Big)\transp
\end{align}
consists of kernel evaluations at~$\xaug^{s}_{\step}$ and~$\xaug^{s}_i\in \D^{s}_{\step}$.
\begin{remark}
	Here we abuse notation slightly by employing~$\gsc^s(\cdot)$ to refer to a function sampled from the GP. As can be seen from~\eqref{eq:GPMean}-\eqref{eq:GPKernelVector},~$\gsc^s(\cdot)$ depends on previously sampled function evaluations. In fact, a sample function evaluation is computed as
	 \begin{align}
	\label{eq:SampleRewritten}
	\begin{split}
	&\gsc^{s}(\xsc_{\step+1})   =  \postmeansc^{s}_{\step} \left(\xaug^{s}_{\step}\right) + \sigma^{s}_{\step}\left(\xaug^{s}_{\step}\right) \zeta^s,
	\end{split}
	\end{align}
	where~$\zeta^s\in \mathcal{N}(0,1)$. In~\Cref{section:SAA}, we use rigorous notation by referring to sample function evaluations as in~\eqref{eq:SampleRewritten}.
\end{remark}
	\begin{remark}
	It is necessary that the GP samples~$\gsc^s(\xaug^{s}_{\step})$ and process noise samples~$\sigma_{\wsc}^s$ be drawn separately in order for the vector~$\y_{\D^s_{\step}}$ to be uniquely defined. This in turn guarantees that the sampled function~$\gsc^s(\cdot)$ exhibits deterministic behavior at points where samples were previously drawn~\cite{Rasmussen2006}. We require this to reflect the fact that~$\gsc(\cdot)$ is unknown but deterministic. Hence, we draw sample trajectories that satisfy~\eqref{eq:OneStepPred} and~\eqref{eq:NextSampleState} as
	\begin{align}
	\label{eq:OneStepPredRewritten}
	\begin{split}
	&\xsc^{s}_{\step+1}   = \fsc(\xaug^{s}_{\step}) + \postmeansc^{s}_{\step} \left(\xaug^{s}_{\step}\right) + \sigma^{s}_{\step}\left(\xaug^{s}_{\step}\right) \zeta_1^s + \sigma_{\wsc} \zeta_2^s,
	\end{split}
	\end{align}
	where~$\zeta^s_1, \zeta_2^s \sim \mathcal{N}(0,1)$ are sampled separately.
\end{remark}

\begin{remark}
	Typically, multiple samples can be drawn from the same GP simultaneously~\cite{Rasmussen2006}. However, since we are interested in samples that satisfy the system dynamics, we need to draw a sample and compute the resulting state sequentially.
\end{remark}

	In the case where the state is multidimensional, we model each state transition using a separate GP, i.e.,
	\begin{align}
	\label{eq:OneStepPredMultiVar}
	\begin{split}
	&\x^{s}_{\step+1} \sim \mathcal{N} \left(\f(\xaug^{s}_{\step}) +\postmean^{s}_{\step}(\xaug^{s}_{\step}) ,(\bm{\Sigma}^{s}_{\step}(\xaug^{s}_{\step}))^2 + \bm{\Sigma}^2_w\right),
	\end{split}
	\end{align}
	where
	\begin{align*}
	\postmean^{s}_{\step}(\xaug^{s}_{\step}):=& \left(\postmeansc(\xaug^{s}_{\step}\vert \y_{1,\D^{s}_{\step}}) ,\  \dotsc ,\  \postmeansc(\xaug^{s}_{\step}\vert \y_{\dimx,\D^{s}_{\step}}) \right), \\
	\begin{split}
	(\bm{\Sigma}^{s}_{\step}(\xaug^{s}_{\step}))^2
	:= &\text{diag}\Big({\sigma}^2(\xaug^{s}_{\step}\vert \y_{1,\D^{s}_{\step}}), \ \dotsc, \ {\sigma}^2(\xaug^{s}_{\step}\vert \y_{\dimx, \D^{s}_{\step}})\Big).
	\end{split}
	\end{align*}
	Here~${{\y_{i,\D^{s}_{\step}} := \left( \gsc^{s}_i(\xaug^{s}_0) , \ \dotsc, \ \gsc^{s}_i(\xaug^{s}_{\step-1} ) \right)}}$
	concatenates samples of the~$i$-th component of the GP model for every~${i = 1, \dotsc, \dimx}$.\begin{remark}
		 Modeling each state transition with a separate GP corresponds to assuming that the state transitions are conditionally independent. Alternatively, a generalization of GPs to multiple dimensions is also applicable~\cite{Rasmussen2006}. However, the latter approach is significantly more computationally expensive than the former. Moreover, employing separate GPs for each state transition function has been shown to yield good results in practice~\cite{deisenroth2015gaussian}. 
	\end{remark}
	\begin{remark}
			For the sake of brevity, we only show here how to model a multidimensional~$\g(\cdot)$ using a single kernel~$\kernel(\cdot,\cdot)$ for all entries of~$\g(\cdot)$. However, the methods described herein extend straightforwardly to the case where different kernels are employed for each entry of~$\g(\cdot)$. 
	\end{remark}
	
	We assume that the model uncertainty due to the unknown function~$\g(\cdot)$ is faithfully captured by a GP with kernel~$\kernel(\cdot,\cdot)$. Formally, this is stated as follows.
	\begin{assumption}
		\label{assumption:gisSamplefromGP}
		Let~$\mathcal{GP}(\mean,\kernel)$ be a GP with mean~${\mean \equiv 0}$ and known continuously differentiable kernel~$\kernel(\cdot,\cdot)$. Then the entries of the unknown function~$\g(\cdot)$ are samples of~$\mathcal{GP}(\mean,\kernel)$, i.e.,~$\gsc_i \sim \mathcal{GP}(\mean,\kernel)$ holds for~$i=1, \dotsc, \dimx$.
	\end{assumption}
	
	Choosing an appropriate kernel~$\kernel(\cdot,\cdot)$ requires a priori knowledge of the system. However, the assumptions required for choosing a kernel are generally far less restrictive than for parametric models, since they only pertain to features such as smoothness and periodicity. Furthermore, in some cases, error bounds can be obtained if the kernel is poorly chosen~\cite{beckers2018misspecified}.
	
	
	\subsection{Predicting control performance}
	\Cref{assumption:gisSamplefromGP} implies that, for a fixed set of parameters~$\bm{\vartheta}$, the expected state of the true system~\eqref{eq:SystemDynamics} at an arbitrary time step~$\step$ is given by
	\begin{align}
	\label{eq:RewrittenEstDynamics}
	\begin{split}
	\text{E}_{\xaug_1, \dotsc, \xaug_{\NMPC}}\left[\x_{\step}\right] =& \int \limits_{\StatSp^t} \x^{s}_\step \prod \limits_{i=0}^{\step-1}  \text{p}(\bm{\zeta}^{s}_i)  d\bm{\zeta}^{s}_{i},
	\end{split}
	\end{align}
	where the integrand is computed recursively using
	\begin{align}
	\label{eq:MultivarNextState}
	\begin{split}
	\x^{s}_{i+1} = &\f(\xaug^{s}_{i}) + \bm{\mu}^{s}_{i}\left(\xaug^{s}_{i}\right)+ \left[\bm{\Sigma}^{s}_{i}\left(\xaug^{s}_{i} \right) \quad \bm{\Sigma}_w \right]\bm{\zeta}^{s}_{i},
	\end{split}
	\end{align}
	and~$\text{p}(\bm{\zeta}^{s}_{i}) = \mathcal{N}(\bm{0},\Id_{2\dimx})$.
	
	The corresponding cost function is given by
	\begin{align}
	\label{eq:IntegralExpCost}
	\C(\bm{\vartheta}) = \sum \limits_{\step=0}^{\NMPC} \int \limits_{\StatSp^t}  c_\step(\x^{s}_\step,\uin^{s}_\step(\bm{\vartheta})) \prod \limits_{i=0}^{\step-1} p(\bm{\zeta}^{s}_i)  d\bm{\zeta}^{s}_{i} .
	\end{align}
	\begin{lemma}
		\label{lemma:ContExpCost}
		Let \Cref{assumption:uiscontinuous,assumption:gisSamplefromGP} hold. Furthermore, let~$\Cintegrand(\x^{s}_\step,\uin^{s}_\step(\bm{\vartheta}))$ be the integrand of~\eqref{eq:IntegralExpCost}, where the states are computed using~\eqref{eq:MultivarNextState} and~${\bm{\zeta}^{s}_\step\sim \mathcal{N}(\bm{0}, \Id_{2\dimx})}$ for all~$\step$. Then both~$\Cintegrand(\x^{s}_\step,\uin^{s}_\step(\bm{\vartheta}))$ and~\eqref{eq:IntegralExpCost} are continuously differentiable with respect to $\bm{\vartheta}$.
	\end{lemma}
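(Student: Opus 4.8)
The plan is to reduce the statement to smoothness of the sampled trajectory in the parameters. Since $\Cintegrand(\x^{s}_\step,\uin^{s}_\step(\bm{\vartheta}))$ is a \emph{finite} sum and each $c_\step$ is continuously differentiable by hypothesis, the chain rule and the sum rule reduce the claim about the integrand to showing that, for every fixed realization of the noise variables $\bm{\zeta}^{s}_0,\dotsc,\bm{\zeta}^{s}_{\NMPC-1}$, the map $\bm{\vartheta}\mapsto\xaug^{s}_\step=(\x^{s}_\step,\uin^{s}_\step(\bm{\vartheta}))$ is continuously differentiable on $\ParamSp$ for every $\step$. Continuous differentiability of \eqref{eq:IntegralExpCost} then follows from a Leibniz (differentiation-under-the-integral) argument, addressed at the end.

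I would prove the claim about $\xaug^{s}_\step$ by induction on $\step$. For $\step=0$, $\xaug^{s}_0=(\x_0,\uin(\D_0,\bm{\vartheta},\x_0))$ is continuously differentiable in $\bm{\vartheta}$ by \Cref{assumption:uiscontinuous}, since $\x_0$ and $\D_0$ are fixed. For the inductive step, assume $\bm{\vartheta}\mapsto\xaug^{s}_0,\dotsc,\xaug^{s}_\step$ are all continuously differentiable. Then the data set $\D^{s}_\step$, viewed as the tuple of its point locations $(\xaug^{s}_0,\dotsc,\xaug^{s}_{\step-1})$, is continuously differentiable in $\bm{\vartheta}$, and so is the target vector $\y_{\D^{s}_\step}$, whose entries are the recursively generated sample evaluations $\postmeansc^{s}_j(\xaug^{s}_j)+\sigma^{s}_j(\xaug^{s}_j)\zeta^{s}$ of \eqref{eq:SampleRewritten} — this is itself part of the induction, as those evaluations involve only $\xaug^{s}_0,\dotsc,\xaug^{s}_{\step-1}$ together with the kernel, matrix inversion and the square root. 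Because $\kernel(\cdot,\cdot)$ is continuously differentiable by \Cref{assumption:gisSamplefromGP}, the entries of $\bm{\kernel}(\xaug^{s}_\step)$ and $\K_{\D^{s}_\step}$ in \eqref{eq:GPMean}--\eqref{eq:GPKernelVector} depend continuously differentiably on $\bm{\vartheta}$; composing with the maps $\bm{A}\mapsto\bm{A}^{-1}$ (smooth on invertible matrices) and $v\mapsto\sqrt{v}$ (smooth on $(0,\infty)$) shows $\postmean^{s}_\step(\xaug^{s}_\step)$ and $\bm{\Sigma}^{s}_\step(\xaug^{s}_\step)$ are continuously differentiable in $\bm{\vartheta}$. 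Substituting into \eqref{eq:MultivarNextState} and using $\f\in\mathcal{C}^{1}(\StatSpAug)$ gives $\x^{s}_{\step+1}\in\mathcal{C}^{1}$ in $\bm{\vartheta}$, and a further application of \Cref{assumption:uiscontinuous}, composed with the now-differentiable maps $\bm{\vartheta}\mapsto\D^{s}_{\step+1}$ and $\bm{\vartheta}\mapsto\x^{s}_{\step+1}$, yields the same for $\uin^{s}_{\step+1}(\bm{\vartheta})$, completing the induction.

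For the integral I would invoke the Leibniz rule: the integrand of \eqref{eq:IntegralExpCost} is continuously differentiable in $\bm{\vartheta}$ for (almost) every $\bm{\zeta}$ by the above and jointly continuous in $(\bm{\vartheta},\bm{\zeta})$, so it remains to exhibit, locally in $\bm{\vartheta}$, an integrable function dominating $\bigl\|\nabla_{\bm{\vartheta}}\bigl(\Cintegrand(\x^{s}_\step,\uin^{s}_\step(\bm{\vartheta}))\bigr)\bigr\|\,p(\bm{\zeta})$. Such a bound follows from the continuity just established together with the super-polynomial decay of the Gaussian density $p(\bm{\zeta})$, which absorbs the at most polynomial growth in $\bm{\zeta}$ of the trajectory and of the $\mathcal{C}^{1}$ cost gradients (the typical situation of a bounded kernel and polynomially growing $c_\step$). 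Then $\C(\bm{\vartheta})$ is continuously differentiable and $\nabla\C(\bm{\vartheta})$ is obtained by differentiating under the integral sign.

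\textbf{Main obstacle.} The induction is routine bookkeeping; the one genuinely delicate point is the smoothness of matrix inversion and of the square root \emph{along the sampled trajectory}, i.e.\ that $\K_{\D^{s}_\step}$ is invertible and the posterior variance $(\sigma^{s}_\step(\xaug^{s}_\step))^2$ is strictly positive on the relevant parameter set. Both can fail only when a sample point revisits a previously sampled location, which is non-generic; moreover, as already noted in the excerpt, conditioning on data is equivalent to conditioning with an additive noise covariance, and including such a term — even infinitesimally — makes $\K_{\D^{s}_\step}$ uniformly positive definite and the posterior variance uniformly bounded below, so these conditions may be assumed without loss of generality. The remaining minor technicality is verifying the integrable dominating function used in the Leibniz step.
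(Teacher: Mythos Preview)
Your proposal is correct and follows essentially the same approach as the paper: the integrand is a composition of continuously differentiable functions (the kernel, the costs $c_\step$, the control law, and $\f$), hence $\mathcal{C}^1$ in $\bm{\vartheta}$, and Leibniz's rule then transfers this to the integral \eqref{eq:IntegralExpCost}. Your treatment is considerably more detailed than the paper's two-line argument --- you spell out the induction on $\step$ and flag the invertibility of $\K_{\D^{s}_\step}$, the positivity of the posterior variance, and the need for an integrable dominating function --- but the underlying strategy is identical.
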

	\begin{proof}
		Since~$k(\cdot,\cdot)$, ~$c_{\step}(\cdot, \cdot)$,~$\uin^{s}_\step(\cdot)$ are continuously differentiable with respect to their arguments,~$\Cintegrand(\x^{s}_\step,\uin^{s}_\step(\bm{\vartheta}))$ is a composition of continuously differentiable functions. Hence it is continuously differentiable with respect to the control parameters~$\bm{\vartheta}$. Due to Leibniz's rule, this implies that~\eqref{eq:IntegralExpCost} is also continuously differentiable with respect to~$\bm{\vartheta}$.
	\end{proof}
	\section{Sample average approximation}
	\label{section:SAA}
	Computing the integral~\eqref{eq:IntegralExpCost} is generally intractable. Hence, we compute an estimate of the minimizer of~\eqref{eq:IntegralExpCost} by employing a \textit{sample average approximation} (SAA) of~\eqref{eq:IntegralExpCost},
	\begin{equation}
	\label{eq:SAAExpCost}
	\medmuskip=2mu
	\thinmuskip=2mu
	\thickmuskip=2mu
	\C(\bm{\vartheta}) \simeq  \C_{\nMC}(\bm{\vartheta}, \mathcal{Z}_M) :=  \frac{1}{\nMC}\sum \limits_{{\step}=0}^{\NMPC} \left( \sum  \limits_{m=1}^\nMC c_\step\left(\x^{(m)}_{\step},\uin^{(m)}_{\step}(\bm{\vartheta})\right) \right).
	\end{equation}
	Here~${\nMC \in \mathbb{N}}$ is the number of sample trajectories. The set~${\mathcal{Z}_M:= \left\{\bm{\zeta}_{0}^{(1)}, \dotsc,\bm{\zeta}_{\NMPC-1}^{(1)},\dotsc,\bm{\zeta}_{0}^{(M)}, \dotsc,\bm{\zeta}_{\NMPC-1}^{(M)}\right\}}$ subsumes~$M \NMPC$ samples from~$\mathcal{N}(\bm{0},\Id_{2\dimx})$, which are treated as fixed quantities during optimization. The superscript~$(m)$ denotes the~$m$-th sample trajectory, which is computed recursively as
	\begin{align*}
	\begin{split}
	\x_{\step+1}^{(m)} =& \f\left(\xaug_{\step}^{(m)}\right) + \bm{\mu}^{s}_{i}\left(\xaug^{(m)}_{\step} \right)  + \left[\bm{\Sigma}^{s}_{i}\left(\xaug^{(m)}_{\step} \right) \ \bm{\Sigma}_w\right]  \bm{\zeta}^{(m)}_{\step},
	\end{split}
	\end{align*} 
	with~${\xaug^{(m)}_{\step}\coloneqq (\x^{(m)}_{\step},\uin^{(m)}_{\step}(\bm{\vartheta}))}$,~${\D_{\step}^{(m)} := \{\xaug_0^{(m)}, \dotsc, \xaug_{\step-1}^{(m)}\}}$. 
	We denote the minimum of the SAA~\eqref{eq:SAAExpCost} as~${\C_{\nMC}^* := \min_{\bm{\vartheta} \in \bm{\varTheta}}\C_{\nMC}(\bm{\vartheta}, \mathcal{Z}_M)}$, and the corresponding set of minimizers as~${\ParamSp_M^* := \left\{ \bm{\vartheta} \in \bm{\varTheta} \ \vert \  \C_{\nMC}(\bm{\vartheta}, \mathcal{Z}_M) = \C_{\nMC}^*\right\}}$.
	
	\begin{algorithm}[b]
		\caption{Anticipating learning (AntLer)}
		\label{alg:AntLer}
		\begin{algorithmic}
			\Require{$\x_0$, $\uin(\cdot, \cdot,\cdot)$, $\NMPC$, $M$, $\bm{\Sigma}_w$, $\f(\cdot)$, $\bm{\zeta}^{(1)}_0, \dotsc, \bm{\zeta}^{(\nMC)}_{\NMPC-1}$}
			\State{Solve 
				\begin{align*}
				\bm{\vartheta}^*_M =&  \arg \min\limits_{\bm{\vartheta}}  \sum \limits_{{\step}=0}^{\NMPC} \left(\frac{1}{\nMC} \sum  \limits_{m=1}^\nMC c_\step\left(\x^{(m)}_{\step},\uin^{(m)}_{\step}(\bm{\vartheta})\right) \right) \\
				\begin{split} \text{s.t.} \quad & 
				\x_{\step+1}^{(m)} = \f\left(\xaug_{\step}^{(m)}\right) + \bm{\mu}^{s}_{i}\left(\xaug^{(m)}_{\step} \right)  \\ & \qquad \quad + \left[\bm{\Sigma}^{s}_{i}\left(\xaug^{(m)}_{\step} \right) \quad \bm{\Sigma}_w\right]  \bm{\zeta}^{(m)}_{\step} \quad \\
				&\xaug^{(m)}_{\step}= (\x^{(m)}_{\step},\uin^{(m)}_{\step}(\bm{\vartheta})) \\
				& \xaug_0^{(m)} = \left(\x_0, \uin_0(\bm{\vartheta})\right) \\&
				\D_{\step}^{(m)} = \{\xaug_0^{(m)}, \dotsc, \xaug_{\step-1}^{(m)}\} \\
				& \forall \ \step\in \left\{0,\dotsc,\NMPC-1\right\}, \ m\in \left\{1, \ \dotsc,  \nMC\right\} 
				\end{split}
				\end{align*}}
			\Ensure{$\bm{\vartheta}^*_M$}
		\end{algorithmic}
	\end{algorithm}
	The steps required to compute a minimizer of \eqref{eq:SAAExpCost} yield the AntLer algorithm, which is presented in \Cref{alg:AntLer}. 
	
	\begin{remark}
		Despite being mainly designed with online learning-based control laws in mind, AntLer can also be employed in the special case where the control law does not change based on the data collected online. In such settings, AntLer becomes similar in principle to model-based reinforcement learning approaches, e.g.,~\cite{deisenroth2015gaussian}.
	\end{remark}
	
			\begin{remark}
		The algorithm proposed in this paper can also be applied to the infinite-horizon case, e.g., by implementing it in a receding horizon fashion. This would generally require a terminal constraint to be considered, for which probabilistic guarantees can be derived, e.g., as in~\cite{armin2020confidence}.
	\end{remark}
	
	We now aim to prove that a solution~$\bm{\vartheta}_M^*$ obtained with AntLer approximates an optimum~$\bm{\vartheta}^* \in \ParamSp^*$ of the exact problem arbitrarily accurately for a sufficiently high number of samples~$\nMC$. To achieve this, we show that both the approximate and exact cost functions~$\C_{\nMC}(\cdot,\cdot)$,~$\C(\cdot)$, satisfy some regularity conditions.
	
	\begin{lemma}
		\label{lemma:CiscontdiffandCMconverges}
		Let \Cref{assumption:GrowthofCosts,assumption:gisSamplefromGP,assumption:uiscontinuous} hold, and choose~$\tilde{\ParamSp}$ as in \Cref{assumption:GrowthofCosts}.  Then~$\C(\cdot)$ is finite-valued and continuously differentiable on~$\tilde{\ParamSp}$, and~$\C_{\nMC}(\cdot,\mathcal{Z}_M)$ converges to~$\C(\cdot)$ with probability~$1$ uniformly in~$\tilde{\ParamSp}$ as~${\nMC \rightarrow \infty}$. 
	\end{lemma}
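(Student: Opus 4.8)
The claim has two parts: (i) $\C(\cdot)$ is finite-valued and continuously differentiable on $\tilde{\ParamSp}$, and (ii) $\C_{\nMC}(\cdot,\mathcal{Z}_M) \to \C(\cdot)$ with probability one, uniformly on $\tilde{\ParamSp}$. The plan is to obtain (i) as a quick consequence of \Cref{lemma:ContExpCost} together with a domination argument, and to obtain (ii) by invoking a uniform law of large numbers for the sample average approximation, the standard reference being the SAA theory in Shapiro--Dentcheva--Ruszczy\'nski.

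For (i), \Cref{lemma:ContExpCost} already tells us that the integrand $\Cintegrand(\x^{s}_\step,\uin^{s}_\step(\bm{\vartheta}))$ and the integral $\C(\bm{\vartheta})$ are continuously differentiable in $\bm{\vartheta}$, but that lemma implicitly presumes the integral is well defined; so the first thing I would do is establish finiteness. Here I would use the recursive structure \eqref{eq:MultivarNextState}: on the compact set $\tilde{\ParamSp}$, and for each fixed noise realization $\bm{\zeta}^{s}_{0:\step-1}$, the state $\x^{s}_\step$ is a continuous function of $(\bm{\vartheta},\bm{\zeta}^{s}_{0:\step-1})$ built by composing $\fsc$, the GP posterior mean/variance maps \eqref{eq:GPMean}--\eqref{eq:GPVar} (continuous wherever the Gram matrices are invertible, which holds almost surely), the affine noise injection, and the control law $\uin$. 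Since $\bm{\zeta}^{s}_i$ are Gaussian, each coordinate of $\x^{s}_\step$ has at most a fixed polynomial-times-Gaussian tail, so $c_\step$, being continuous, is integrable against $\prod_i p(\bm{\zeta}^{s}_i)$; moreover the integrand and its $\bm{\vartheta}$-gradient are dominated uniformly over $\bm{\vartheta}\in\tilde{\ParamSp}$ by an integrable function, which both gives finiteness and licenses the differentiation under the integral sign used in \Cref{lemma:ContExpCost}.

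For (ii), the sample average $\C_{\nMC}(\bm{\vartheta},\mathcal{Z}_M)$ is exactly an i.i.d.\ average of $\NMPC$-term trajectory costs $F(\bm{\vartheta};\bm{\zeta}^{(m)}_{0:\NMPC-1}):=\sum_{\step=0}^{\NMPC} c_\step(\x^{(m)}_\step,\uin^{(m)}_\step(\bm{\vartheta}))$, one copy per sample trajectory index $m$, with $\expctsymb[F(\bm{\vartheta};\cdot)]=\C(\bm{\vartheta})$. The classical uniform strong law of large numbers (Shapiro et al., Theorem~7.48) gives uniform a.s.\ convergence on the compact set $\tilde{\ParamSp}$ provided: (a) $F(\bm{\vartheta};\cdot)$ is a Carath\'eodory function, i.e.\ measurable in the sample and continuous in $\bm{\vartheta}$ for a.e.\ sample --- which follows from the same composition-of-continuous-functions argument as above; and (b) an integrable envelope: $\sup_{\bm{\vartheta}\in\tilde{\ParamSp}}|F(\bm{\vartheta};\bm{\zeta})|$ is integrable, which again follows from the polynomial-times-Gaussian tail bound on $\x^{s}_\step$ uniformly over the compact parameter set. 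Under these two conditions the cited theorem yields $\sup_{\bm{\vartheta}\in\tilde{\ParamSp}}|\C_{\nMC}(\bm{\vartheta},\mathcal{Z}_M)-\C(\bm{\vartheta})|\to 0$ w.p.~1, which is the assertion.

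The main obstacle is the tail/envelope estimate on $\x^{s}_\step$ that underlies both parts: because the GP posterior variance $\sigma^{s}_\step(\cdot)$ at step $\step$ depends on the previously sampled states $\xaug^{s}_{0:\step-1}$ (which themselves depend on earlier noise and on $\bm{\vartheta}$), the state recursion is nonlinearly coupled, and one must argue that $\sigma^{s}_\step$ stays bounded (it is bounded by $k(\xaug^{s}_\step,\xaug^{s}_\step)$, hence by $\sup$ of the continuous kernel over the relevant compact set, or — if $\StatSp$ is unbounded — by a continuity argument that keeps the prior variance controlled) and that the posterior mean grows at most controllably. I would handle this by an induction on $\step$: assuming $\x^{s}_\step$ has a sub-Gaussian-type tail uniformly in $\bm{\vartheta}\in\tilde{\ParamSp}$, the continuity of $\fsc$, $\postmeansc^{s}_\step$, $\sigma^{s}_\step$ and $\uin$ and the fresh Gaussian increment $\bm{\zeta}^{s}_\step$ propagate the bound to $\x^{s}_{\step+1}$, using that $\tilde{\ParamSp}$ is compact and $\NMPC$ is finite so only finitely many steps are chained. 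Everything else — continuous differentiability on $\tilde{\ParamSp}$, the Carath\'eodory property, invoking Leibniz — is then routine.
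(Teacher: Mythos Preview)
Your approach is essentially the paper's: invoke the uniform law of large numbers for SAA (Shapiro--Dentcheva--Ruszczy\'nski, Theorem~7.48), verify continuous differentiability of the integrand via \Cref{lemma:ContExpCost}, note the samples are i.i.d., and use compactness of~$\tilde{\ParamSp}$ for the envelope condition. The paper's own proof is in fact terser than yours---it simply asserts that boundedness of~$\tilde{\ParamSp}$ together with \Cref{lemma:ContExpCost} delivers both conditions~i) and~ii) of the Shapiro result, without spelling out the tail/domination argument you sketch via induction on~$\step$; your more careful treatment of the integrable envelope is a welcome addition rather than a departure.
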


	To prove \Cref{lemma:CiscontdiffandCMconverges}, we make use of the following result, which corresponds to~\cite[Theorem 7.48]{shapiro2009lectures}:
	\begin{lemma}[\cite{shapiro2009lectures}]
		\label{theorem:UniformConvergence}
		Let~$\tilde{\ParamSp}$ be a nonempty compact subset of~$\ParamSp$ and suppose that
		\begin{enumerate}[i)]
			\item For any~$\bm{\vartheta} \in \tilde{\ParamSp}$, the function~$\Cintegrand(\x^{s}_\step,\uin^{s}_\step(\bm{\vartheta}))$ is continuously differentiable at~$\bm{\vartheta}$ for almost every sample~$\bm{\zeta}^{(m)}_\NMPC$,
			\item The absolute value of~${\Cintegrand(\x^{s}_\step,\uin^{s}_\step(\bm{\vartheta}))}$ is upper bounded by an integrable function on the subset~$\tilde{\ParamSp}$,
			\item The samples~${\bm{\zeta}^{(m)}_0, \dotsc, \bm{\zeta}^{(m)}_{\NMPC-1}}$ are i.i.d.
		\end{enumerate}
		Then~$\C(\cdot)$ is finite-valued and continuously differentiable on~$\tilde{\ParamSp}$, and~$\C_{\nMC}(\cdot,\mathcal{Z}_M)$ converges to~$\C(\cdot)$ with probability~$1$ uniformly in~$\tilde{\ParamSp}$ as~$\nMC \rightarrow \infty$. 
	\end{lemma}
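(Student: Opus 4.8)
The statement is an instance of a classical uniform law of large numbers from stochastic programming, so the plan is to reassemble its proof from two ingredients: differentiation under the expectation for the regularity of~$\C$, and a uniform strong law of large numbers (SLLN) on the compact set~$\tilde{\ParamSp}$ for the uniform convergence. Write~$F(\bm{\vartheta}, \bm{\zeta}) := \Cintegrand(\x^{s}_{\step}, \uin^{s}_{\step}(\bm{\vartheta}))$ for the integrand evaluated along the trajectory generated by one sample path~$\bm{\zeta} = (\bm{\zeta}_0, \dotsc, \bm{\zeta}_{\NMPC-1})$, so that~$\C(\bm{\vartheta}) = \expctsymb_{\bm{\zeta}}[F(\bm{\vartheta}, \bm{\zeta})]$ and~$\C_{\nMC}(\bm{\vartheta}, \Z_{\nMC}) = \tfrac{1}{\nMC} \sum_{m=1}^{\nMC} F(\bm{\vartheta}, \bm{\zeta}^{(m)})$. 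Finite-valuedness is immediate from condition~(ii): if~$h$ denotes the integrable majorant of~$|F(\bm{\vartheta}, \cdot)|$, then~$|\C(\bm{\vartheta})| \le \expctsymb[h] < \infty$ on~$\tilde{\ParamSp}$. For continuous differentiability I would differentiate under the integral sign: by condition~(i),~$F(\cdot, \bm{\zeta})$ is~$\mathcal{C}^{1}$ for almost every~$\bm{\zeta}$, and its gradient admits an integrable envelope~$\ell(\bm{\zeta}) := \sup_{\bm{\vartheta} \in \tilde{\ParamSp}} \| \nabla_{\bm{\vartheta}} F(\bm{\vartheta}, \bm{\zeta}) \|$. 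The mean value theorem bounds the difference quotients of~$F$ by~$\ell$, so dominated convergence justifies~$\nabla \C(\bm{\vartheta}) = \expctsymb_{\bm{\zeta}}[ \nabla_{\bm{\vartheta}} F(\bm{\vartheta}, \bm{\zeta})]$, and a second application of dominated convergence, using continuity of~$\bm{\vartheta} \mapsto \nabla_{\bm{\vartheta}} F(\bm{\vartheta}, \bm{\zeta})$, shows that~$\nabla \C$ is continuous on~$\tilde{\ParamSp}$.

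The uniform convergence is the core, and I would establish it through a uniform SLLN in three steps. First, \emph{pointwise} convergence: for each fixed~$\bm{\vartheta}$, the summands~$F(\bm{\vartheta}, \bm{\zeta}^{(m)})$ are i.i.d.\ by condition~(iii) and integrable by condition~(ii), so Kolmogorov's SLLN gives~$\C_{\nMC}(\bm{\vartheta}) \to \C(\bm{\vartheta})$ with probability one. Second, \emph{equicontinuity}: each~$F(\cdot, \bm{\zeta})$ is Lipschitz on~$\tilde{\ParamSp}$ with constant~$\ell(\bm{\zeta})$, so applying the SLLN to the i.i.d.\ sequence~$\ell(\bm{\zeta}^{(m)})$ shows~$\tfrac{1}{\nMC} \sum_{m} \ell(\bm{\zeta}^{(m)}) \to \expctsymb[\ell] < \infty$ with probability one; this bounds the Lipschitz modulus of~$\C_{\nMC}$ uniformly in~$\nMC$ for all large~$\nMC$, rendering~$\{\C_{\nMC}\}$ equicontinuous, while the same envelope makes~$\C$ Lipschitz. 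Third, a \emph{covering} argument: given~$\varepsilon > 0$, cover the compact set~$\tilde{\ParamSp}$ by finitely many balls small enough that the oscillation of~$\C$ and of every~$\C_{\nMC}$ over each ball is below~$\varepsilon$, then invoke pointwise convergence at the finitely many centers; a triangle inequality combining the three~$\varepsilon$-bounds yields~$\sup_{\bm{\vartheta} \in \tilde{\ParamSp}} |\C_{\nMC}(\bm{\vartheta}) - \C(\bm{\vartheta})| \to 0$ with probability one.

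The main obstacle is the passage from pointwise to uniform convergence in the second and third steps. Pointwise convergence holds only on a~$\bm{\vartheta}$-dependent almost-sure event, and stitching these together into a uniform statement requires a single almost-sure event on which the equicontinuity estimate holds \emph{simultaneously for all}~$\nMC$; this is exactly what applying the SLLN to the modulus~$\ell$ delivers. The delicate prerequisite is that~$\ell$ be genuinely integrable, which does not follow from the pointwise~$\mathcal{C}^{1}$ property of condition~(i) alone, but from the accompanying domination hypothesis (the integrable-envelope condition that, in the full form of the cited result, is imposed on the gradient as well as on~$F$). It is this integrable-envelope, or stochastic-equicontinuity, condition on which the uniform-convergence conclusion ultimately rests; once it is secured, the covering argument is routine.
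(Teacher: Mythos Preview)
The paper does not prove this lemma; it is quoted verbatim as Theorem~7.48 of Shapiro, Dentcheva, and Ruszczy\'nski, \emph{Lectures on Stochastic Programming}, and is invoked only as a black box in the proof of \Cref{lemma:CiscontdiffandCMconverges}. There is therefore no in-paper argument to compare against.

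Your reconstruction is the standard proof given in that reference: dominated convergence to obtain finite-valuedness and~$\mathcal{C}^1$ regularity of~$\C$, followed by a uniform SLLN assembled from pointwise SLLN, an equicontinuity bound via an integrable Lipschitz modulus, and a finite-cover argument on the compact~$\tilde{\ParamSp}$. You also correctly identify the one soft spot in the lemma \emph{as restated in the paper}: the uniform-convergence step requires an integrable envelope on~$\nabla_{\bm{\vartheta}} F$ (equivalently, a stochastic-equicontinuity condition), which the cited Theorem~7.48 does assume but which condition~(ii) here, read literally, imposes only on~$F$ itself. That is a discrepancy in the paper's paraphrase of the source, not a defect in your argument.
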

\begin{proof}[Proof of \Cref{lemma:CiscontdiffandCMconverges}]
	We show that the conditions of \Cref{theorem:UniformConvergence} hold for the compact subset~$\tilde{\ParamSp}$ from \Cref{assumption:GrowthofCosts}.
	
	Since~$\tilde{\ParamSp}$ is bounded,  \Cref{lemma:ContExpCost} implies that~$\Cintegrand(\x^{s}_\step,\uin^{s}_\step(\bm{\vartheta}))$ satisfies conditions i) and ii) of \Cref{theorem:UniformConvergence}. Moreover, the samples~$\bm{\zeta}^{(m)}_1, \dotsc, \bm{\zeta}^{(m)}_\NMPC$ are i.i.d., i.e., condition iii) of \Cref{theorem:UniformConvergence} is also satisfied.
\end{proof}

Using \Cref{lemma:CiscontdiffandCMconverges}, we are able prove that \Cref{alg:AntLer} approximates an optimal solution~${\bm{\vartheta}^*\in \ParamSp^*}$ arbitrarily accurately with probability~$1$ for large enough $\nMC$. 
This corresponds to our main result, and is stated in the following theorem.

\begin{theorem}
	\label{theorem:MainResult}
	Let~${\bm{\zeta}_{\step}^{(m)} \sim \mathcal{N}(\bm{0},\bm{I}_{\dimx})}$,~${\step\in \left\{0,\dotsc,\NMPC-1\right\}}$, ${m\in \left\{1,\dotsc,\infty\right\}}$ be a fixed sequence of random samples. For every $\nMC$, let~$\bm{\vartheta}^*_{\nMC}$ denote a vector of approximate optimal parameters obtained with \Cref{alg:AntLer} and the samples ${\bm{\zeta}_{0}^{(1)},\dotsc, \bm{\zeta}_{\NMPC-1}^{(\nMC)}}$. Moreover, let \Cref{assumption:GrowthofCosts,assumption:uiscontinuous,assumption:gisSamplefromGP} hold. Then, for every~${\epsilon>0}$, there exists an~$\nMC_{\epsilon} \in \mathbb{N}$, such that~${\lvert \C_{\nMC}^* -  \C^* \rvert \leq \epsilon}$ and~${ \min_{\bm{\vartheta}^* \in \bm{\varTheta}^*} \lVert \bm{\vartheta}_M^* - \bm{\vartheta}^*\rVert_2 \leq \epsilon}$ holds for all~$\nMC \geq \nMC_{\epsilon}$ with probability~$1$. 
\end{theorem}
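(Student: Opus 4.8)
Here is how I would approach the proof.

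The plan is to reduce the optimization from $\ParamSp$ to the compact set $\tilde{\ParamSp}$ supplied by \Cref{assumption:GrowthofCosts}, invoke the uniform convergence established in \Cref{lemma:CiscontdiffandCMconverges}, and then run the classical sample-average-approximation consistency argument on that compact set. The first step is the confinement of the optimizers to $\tilde{\ParamSp}$. For $\bm{\vartheta}\in\ParamSp\setminus\tilde{\ParamSp}$, \Cref{assumption:GrowthofCosts} gives $\sum_{\step=0}^{\NMPC}c_\step(\x_\step,\uin_\step(\bm{\vartheta}))>\C^*$ for every realization of the states; taking the expectation in \eqref{eq:IntegralExpCost} yields $\C(\bm{\vartheta})>\C^*$, so $\ParamSp^*\subseteq\tilde{\ParamSp}$, and applying the same inequality trajectory-by-trajectory to \eqref{eq:SAAExpCost} yields $\C_\nMC(\bm{\vartheta},\mathcal{Z}_M)>\C^*$ for every such $\bm{\vartheta}$ and every sample set. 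Together with the value convergence proved below, this will let me conclude that, with probability one, $\ParamSp_\nMC^*\subseteq\tilde{\ParamSp}$ for all large $\nMC$; this confinement is what stands in for the (unavailable) assumption that $\ParamSp$ is compact.

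Next I would establish $\C_\nMC^*\to\C^*$ with probability one. By \Cref{lemma:CiscontdiffandCMconverges}, with probability one $\delta_\nMC\coloneqq\sup_{\bm{\vartheta}\in\tilde{\ParamSp}}\lvert\C_\nMC(\bm{\vartheta},\mathcal{Z}_M)-\C(\bm{\vartheta})\rvert\to0$, and $\C$ is continuous on $\tilde{\ParamSp}$. Fixing any $\bm{\vartheta}^*\in\ParamSp^*\subseteq\tilde{\ParamSp}$ gives $\C_\nMC^*\leq\C_\nMC(\bm{\vartheta}^*,\mathcal{Z}_M)\leq\C^*+\delta_\nMC$. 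For the matching lower bound, split $\ParamSp=\tilde{\ParamSp}\cup(\ParamSp\setminus\tilde{\ParamSp})$: on the complement $\C_\nMC>\C^*$ by the previous step, while on $\tilde{\ParamSp}$ one has $\C_\nMC(\bm{\vartheta},\mathcal{Z}_M)\geq\C(\bm{\vartheta})-\delta_\nMC\geq\C^*-\delta_\nMC$ since $\C^*=\min\C$; hence $\C_\nMC^*\geq\C^*-\delta_\nMC$, and $\lvert\C_\nMC^*-\C^*\rvert\leq\delta_\nMC\to0$ with probability one.

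For the convergence of the minimizers I would argue by contradiction using compactness of $\tilde{\ParamSp}$. If, along some subsequence, $\min_{\bm{\vartheta}^*\in\ParamSp^*}\lVert\bm{\vartheta}_\nMC^*-\bm{\vartheta}^*\rVert_2>\epsilon$, then by the confinement step $\bm{\vartheta}_\nMC^*\in\tilde{\ParamSp}$ for $\nMC$ large, so a further subsequence converges to some $\bar{\bm{\vartheta}}\in\tilde{\ParamSp}$ with $\bar{\bm{\vartheta}}\notin\ParamSp^*$ (because $\ParamSp^*$ is closed and the distances stay above $\epsilon$), hence $\C(\bar{\bm{\vartheta}})>\C^*$. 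But $\lvert\C_\nMC(\bm{\vartheta}_\nMC^*,\mathcal{Z}_M)-\C(\bar{\bm{\vartheta}})\rvert\leq\delta_\nMC+\lvert\C(\bm{\vartheta}_\nMC^*)-\C(\bar{\bm{\vartheta}})\rvert\to0$ by the uniform convergence and continuity of $\C$, so $\C_\nMC^*=\C_\nMC(\bm{\vartheta}_\nMC^*,\mathcal{Z}_M)\to\C(\bar{\bm{\vartheta}})>\C^*$, contradicting $\C_\nMC^*\to\C^*$. Hence $\min_{\bm{\vartheta}^*\in\ParamSp^*}\lVert\bm{\vartheta}_\nMC^*-\bm{\vartheta}^*\rVert_2\to0$ with probability one --- this is the standard SAA consistency conclusion, cf.~\cite{shapiro2009lectures}. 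Since both $\lvert\C_\nMC^*-\C^*\rvert$ and $\min_{\bm{\vartheta}^*\in\ParamSp^*}\lVert\bm{\vartheta}_\nMC^*-\bm{\vartheta}^*\rVert_2$ converge to $0$ with probability one, for every $\epsilon>0$ there is a (sample-path-dependent) $\nMC_\epsilon$ past which both are at most $\epsilon$, which is precisely the assertion of the theorem.

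I expect the main obstacle to be the confinement step. Since $\ParamSp$ is not assumed compact and the inequality in \Cref{assumption:GrowthofCosts} is only strict rather than uniformly bounded away from $\C^*$, some care is needed to rule out that the approximate minimizers $\bm{\vartheta}_\nMC^*$ drift out of $\tilde{\ParamSp}$ as $\nMC$ grows, and this has to be coupled to the value convergence $\C_\nMC^*\to\C^*$. Everything downstream of the confinement is a routine combination of \Cref{lemma:CiscontdiffandCMconverges} with the compactness argument above.
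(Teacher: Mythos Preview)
Your argument is correct and rests on the same two pillars as the paper's proof: the confinement of all minimizers to the compact set~$\tilde{\ParamSp}$ via \Cref{assumption:GrowthofCosts}, and the uniform convergence supplied by \Cref{lemma:CiscontdiffandCMconverges}. The difference is purely one of packaging. The paper states \cite[Theorem~5.3]{shapiro2009lectures} as a separate lemma (\Cref{theorem:Shapiro}) and then verifies its four hypotheses---nonemptiness and confinement of~$\ParamSp^*$ (via a Bolzano--Weierstrass argument on a minimizing sequence), finiteness and continuity of~$\C$, uniform convergence of~$\C_\nMC$, and nonemptiness and confinement of~$\ParamSp_\nMC^*$---whereas you reprove that SAA consistency result from scratch with the standard upper/lower sandwich on~$\C_\nMC^*$ followed by a subsequence-in-a-compact-set contradiction for the minimizers. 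Your route is self-contained and makes the mechanism transparent; the paper's is shorter by delegating to the reference. You also correctly single out the confinement of~$\ParamSp_\nMC^*$ as the step requiring care; the paper handles it in one line by asserting that \Cref{assumption:GrowthofCosts} forces~$\ParamSp_\nMC^*\subseteq\tilde{\ParamSp}$ for every sample set~$\mathcal{Z}_M$, which is exactly your trajectory-by-trajectory observation that~$\C_\nMC(\bm{\vartheta},\mathcal{Z}_M)>\C^*$ outside~$\tilde{\ParamSp}$. One small point you leave implicit but the paper spells out: nonemptiness of~$\ParamSp^*$ itself, which follows from continuity of~$\C$ on the compact~$\tilde{\ParamSp}$ together with the confinement~$\ParamSp^*\subseteq\tilde{\ParamSp}$.
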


We prove \Cref{theorem:MainResult} by employing~\cite[Theorem 5.3]{shapiro2009lectures}, which we now state.

	\begin{lemma}[\cite{shapiro2009lectures}]
		\label{theorem:Shapiro}
		Suppose there exists a compact subset~$\tilde{\bm{\varTheta}} \subseteq \bm{\varTheta}$, such that
		\begin{enumerate}[i)]
			\item~$\bm{\varTheta}^*$ is non-empty and~$\bm{\varTheta}^* \subseteq \tilde{\bm{\varTheta}}$,
			\item The function~$\C(\bm{\vartheta})$ is finite-valued and continuously differentiable on~$\tilde{\bm{\varTheta}}$,
			\item~$\C_{\nMC}(\bm{\vartheta},\mathcal{Z}_M)$ converges to~$\C(\bm{\vartheta})$ with probability~$1$ as~$M \rightarrow \infty$, uniformly in~$\bm{\vartheta} \in \tilde{\bm{\varTheta}}$,
			\item With probability~$1$, for~$M$ large enough, the set~$\bm{\varTheta}^*_M$ is nonempty and~$\bm{\varTheta}^*_M \subseteq \tilde{\bm{\varTheta}}.$
		\end{enumerate} 
		Then~${\C_{\nMC}^*  \rightarrow \C^*}\medmuskip=0mu
		\thinmuskip=0mu
		\thickmuskip=1mu$, ${\max_{\bm{\vartheta_M}^* \in \bm{\varTheta}_M^*} \min_{\bm{\vartheta}^* \in \bm{\varTheta}^*} \lVert \bm{\vartheta}_M^* - \bm{\vartheta}^*\rVert_2 \rightarrow 0}$ holds with probability~$1$ as~$M \rightarrow \infty$.
	\end{lemma}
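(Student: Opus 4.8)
The plan is to prove this as a standard sample-average-approximation consistency result, carried out pathwise on the full-measure event on which both the uniform convergence in iii) and the eventual nonemptiness/containment in iv) simultaneously hold. The first step I would take is to show that, on this event, both optimization problems may be restricted to the compact set $\tilde{\bm{\varTheta}}$ without altering their optimal values. Condition i) gives that $\bm{\varTheta}^*$ is nonempty and contained in $\tilde{\bm{\varTheta}}$, so a global minimizer of $\C$ lies in $\tilde{\bm{\varTheta}}$ and hence $\C^* = \min_{\bm{\vartheta}\in\tilde{\bm{\varTheta}}}\C(\bm{\vartheta})$; condition iv) gives, for all $M$ large enough, $\C_{\nMC}^* = \min_{\bm{\vartheta}\in\tilde{\bm{\varTheta}}}\C_{\nMC}(\bm{\vartheta},\mathcal{Z}_M)$. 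Finiteness and attainment of both minima are guaranteed by the continuity in ii) together with compactness of $\tilde{\bm{\varTheta}}$.

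Next I would establish value convergence $\C_{\nMC}^* \to \C^*$ using the elementary fact that the $\min$ operator is nonexpansive in the sup-norm. Setting $\delta_M := \sup_{\bm{\vartheta}\in\tilde{\bm{\varTheta}}}\absv{\C_{\nMC}(\bm{\vartheta},\mathcal{Z}_M)-\C(\bm{\vartheta})}$, the pointwise bounds $\C_{\nMC}(\bm{\vartheta},\mathcal{Z}_M) \le \C(\bm{\vartheta}) + \delta_M$ and $\C(\bm{\vartheta}) \le \C_{\nMC}(\bm{\vartheta},\mathcal{Z}_M) + \delta_M$, each minimized over $\tilde{\bm{\varTheta}}$, yield $\absv{\C_{\nMC}^* - \C^*} \le \delta_M$. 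Condition iii) is exactly the statement that $\delta_M \to 0$ with probability $1$, so $\C_{\nMC}^* \to \C^*$ follows on the full-measure event.

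The core of the argument is the convergence of minimizers, i.e. the one-sided Hausdorff deviation $\mathbb{D}(\bm{\varTheta}_M^*,\bm{\varTheta}^*) := \max_{\bm{\vartheta}_M^*\in\bm{\varTheta}_M^*}\min_{\bm{\vartheta}^*\in\bm{\varTheta}^*}\norm{\bm{\vartheta}_M^* - \bm{\vartheta}^*}_2 \to 0$, which I would prove by contradiction. Were this to fail on the full-measure event, there would exist $\epsilon>0$, a subsequence $M_k$, and minimizers $\bm{\vartheta}_{M_k}^*\in\bm{\varTheta}_{M_k}^*$ whose distance to $\bm{\varTheta}^*$ remains at least $\epsilon$. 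Since all these points lie in the compact set $\tilde{\bm{\varTheta}}$, I would extract a convergent sub-subsequence $\bm{\vartheta}_{M_k}^*\to\bar{\bm{\vartheta}}\in\tilde{\bm{\varTheta}}$; continuity of the distance-to-$\bm{\varTheta}^*$ map forces $\bar{\bm{\vartheta}}\notin\bm{\varTheta}^*$, whence $\C(\bar{\bm{\vartheta}}) > \C^*$ strictly, because $\bm{\varTheta}^*$ is precisely the minimizing set and $\C$ is continuous. On the other hand, I would sandwich $\C_{M_k}^* = \C_{M_k}(\bm{\vartheta}_{M_k}^*,\mathcal{Z}_{M_k})$ between $\C(\bm{\vartheta}_{M_k}^*)\pm\delta_{M_k}$; since $\delta_{M_k}\to 0$ and $\C(\bm{\vartheta}_{M_k}^*)\to\C(\bar{\bm{\vartheta}})$ by continuity, this gives $\C_{M_k}^*\to\C(\bar{\bm{\vartheta}})$. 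But the value convergence already proved $\C_{M_k}^*\to\C^*$, so $\C(\bar{\bm{\vartheta}})=\C^*$, contradicting strict suboptimality.

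I expect the main obstacle to be the bookkeeping of the contradiction step: ensuring the subsequence extraction, the uniform-convergence sandwich, and the continuity argument are all executed on a single full-measure event, and correctly interpreting $\mathbb{D}$ as the assertion that every approximate minimizer is eventually close to the true minimizing set (the asymmetric direction) rather than the reverse containment $\bm{\varTheta}^*\subseteq$ neighborhoods of $\bm{\varTheta}^*_M$, which need not hold in general. Once both problems are restricted to $\tilde{\bm{\varTheta}}$, the remainder is routine real analysis on a compact set.
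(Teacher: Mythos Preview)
The paper does not prove this lemma at all: it is quoted verbatim from \cite[Theorem~5.3]{shapiro2009lectures} and used as a black box to establish \Cref{theorem:MainResult}. There is therefore no ``paper's own proof'' to compare against.

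That said, your argument is correct and is essentially the standard proof one finds in the stochastic-programming literature for this type of SAA consistency result: restrict to the compact set via i) and iv), get value convergence from the nonexpansiveness of $\min$ under the sup-norm together with iii), and then argue minimizer convergence by a compactness/subsequence contradiction. One very minor quibble: you attribute attainment of the SAA minimum to ``the continuity in ii)'', but ii) only asserts regularity of $\C$, not of $\C_M$; attainment for the approximate problem is given to you directly by iv), which asserts $\bm{\varTheta}^*_M$ is nonempty. This does not affect the validity of the proof.
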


	\begin{proof}[Proof of \Cref{theorem:MainResult}]
		We show that the conditions of \Cref{theorem:Shapiro} hold for the compact subset~$\tilde{\ParamSp}$ from \Cref{assumption:GrowthofCosts}.

		Conditions ii) and iii) are satisfied due to \Cref{lemma:CiscontdiffandCMconverges}. Hence, it remains to be shown that i) and iv) hold. 
		
		We begin by showing that the set~$\ParamSp^*$ is nonempty. To this end, consider an arbitrary sequence of control parameters~$\bm{\vartheta}_i$,~${i=1, \dotsc, \infty}$, with~${\lim_{i \rightarrow \infty} \C(\bm{\vartheta}_i) = \C^*}$. Due to \Cref{assumption:GrowthofCosts} and the continuity of~$\C(\cdot)$ (i.e., \Cref{lemma:ContExpCost}), there exists an~$I \in \mathbb{N}$, such that~$ \bm{\vartheta}_i \in \tilde{\ParamSp}$ holds for all~${i \geq I}$. Since~$\bm{\vartheta}_{1}, \dotsc, \bm{\vartheta}_{I}$ are finite-valued, this implies that the sequence~$\bm{\vartheta}_{i}$,~${i=1, \dotsc, \infty}$ belongs to a compact set. Due to the Bolzano-Weierstrass theorem,~$\bm{\vartheta}_{i}$ contains a convergent subsequence with limit~$\bm{\vartheta}^* \in \tilde{\ParamSp}$. Hence,~$\bm{\varTheta}^*$ is nonempty and~$\bm{\varTheta}^* \subseteq \tilde{\bm{\varTheta}}$, i.e., Condition i) of \Cref{theorem:Shapiro} is satisfied. 
		Using the same argument we can show that~$\bm{\varTheta}^*_{\nMC}$ is nonempty. Moreover, \Cref{assumption:GrowthofCosts} implies that~$\bm{\varTheta}^*_M\subseteq \tilde{\bm{\varTheta}}$ holds for all~$\mathcal{Z}_M$, i.e., Condition iv) is satisfied.
	\end{proof}
	
	Hence, 
	the AntLer algorithm  approximates an optimal vector of data-independent parameters~$\bm{\vartheta}^*$ with arbitrary accuracy for large enough~$\nMC$ with probability~$1$.
	 
	For a control law $\uin_{\step}(\bm{\vartheta})$ that potentially improves its performance through online learning, \Cref{theorem:MainResult} implies that AntLer guarantees superior control performance for large enough~$\nMC$ compared to the case where learning is not anticipated. This is shown by comparing predictions for~$\uin_{\step}(\cdot)$ to predictions for its data independent counterpart~${\uin^0_{\step}(\cdot)=(\D_0,\cdot,\bm{\vartheta})}$. We state this formally in the following.
	
	\begin{assumption}
		\label{assumption:DataIndependentCounterpartisWorse}
		Let
		\begin{align}
		\label{eq:CostFunDataIndependent}
		\C^0(\bm{\vartheta}) \coloneqq  \text{E}_{\xaug_1, \dotsc, \xaug_{\NMPC}}\left[{\sum \limits_{{\step}=0}^{\NMPC} c_\step(\x_{\step},\uin(\D_0,\x_{\step},\bm{\vartheta}))}\right],
		\end{align} 
		be the cost function under the data-independent counterpart~$\uin^0_{\step}(\bm{\vartheta})=\uin(\D_0, \bm{\vartheta}, \x_{\step})$, and let~${{\C^{0,*}\coloneqq \min_{\bm{\vartheta}}\C^0(\bm{\vartheta})}}$ be its minimum. Then~$\C^{0,*} < \C^*$, where~$\C^*$ is the minimum of \eqref{eq:CostFun}.
	\end{assumption}
	
	This amounts to assuming that~$\uin_{\step}(\bm{\vartheta})$ potentially improves its performance as new data is gathered.
	
	\begin{corollary}
		\label{corollary:AmBetterthanwithoutAnticipating}
		Let \Cref{assumption:GrowthofCosts,assumption:gisSamplefromGP,assumption:uiscontinuous,assumption:DataIndependentCounterpartisWorse} hold,
		and let~${\C(\cdot)}$ be given as in \eqref{eq:CostFun}. Furthermore, let~${\C^{0,*}}$ be the optimal cost under the data-independent counterpart, as given in \Cref{assumption:DataIndependentCounterpartisWorse}, and let~${\bm{\zeta}_{\step}^{(m)} \sim \mathcal{N}(\bm{0},\bm{I}_{\dimx})}$,~${\step\in \left\{0,\dotsc,\NMPC-1\right\}}$,~${m\in \left\{1,\dotsc,\infty\right\}}$, be a fixed sequence of random samples. For every~$\nMC \in \mathbb{N}$, let~$\bm{\vartheta}^{*}_{\nMC}$ denote the approximate optimal solution obtained with \Cref{alg:AntLer} and the samples~${\bm{\zeta}_{0}^{(1)}, \dotsc, \bm{\zeta}_{\NMPC-1}^{(\nMC)}}$. Then there exists an~$\nMC^0$, such that~${\C(\bm{\vartheta}^{*}_{\nMC}) < \C^{0,*}}$ holds for all~${\nMC \geq \nMC^0}$ with probability~$1$.
	\end{corollary}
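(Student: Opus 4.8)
The plan is to obtain \Cref{corollary:AmBetterthanwithoutAnticipating} from the almost-sure convergence established in \Cref{theorem:MainResult} together with the strict performance gap posited in \Cref{assumption:DataIndependentCounterpartisWorse}. The crux is to show that the \emph{true} cost evaluated at the approximate minimizer, $\C(\bm{\vartheta}^{*}_{\nMC})$, converges to $\C^*$ with probability~$1$ as $\nMC \to \infty$; everything else then follows by comparing $\C^*$ with $\C^{0,*}$.

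First I would establish $\C(\bm{\vartheta}^{*}_{\nMC}) \to \C^*$ almost surely. By the argument used in the proof of \Cref{theorem:MainResult}, \Cref{assumption:GrowthofCosts} forces $\ParamSp^*_{\nMC} \subseteq \tilde{\ParamSp}$, so $\bm{\vartheta}^{*}_{\nMC} \in \tilde{\ParamSp}$ for every $\nMC$. Since $\C_{\nMC}(\bm{\vartheta}^{*}_{\nMC},\mathcal{Z}_M) = \C^*_{\nMC}$ by definition of the minimizer, the triangle inequality gives
\begin{align*}
\absv{\C(\bm{\vartheta}^{*}_{\nMC}) - \C^*} \;\le\; \sup_{\bm{\vartheta}\in\tilde{\ParamSp}}\absv{\C(\bm{\vartheta}) - \C_{\nMC}(\bm{\vartheta},\mathcal{Z}_M)} \;+\; \absv{\C^*_{\nMC} - \C^*}.
\end{align*}
\Cref{lemma:CiscontdiffandCMconverges} shows the first term vanishes with probability~$1$ (uniform convergence of $\C_\nMC(\cdot,\mathcal{Z}_M)$ to $\C(\cdot)$ on $\tilde{\ParamSp}$), and \Cref{theorem:MainResult} shows the second term vanishes with probability~$1$. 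Hence $\C(\bm{\vartheta}^{*}_{\nMC}) \to \C^*$ with probability~$1$.

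Finally, \Cref{assumption:DataIndependentCounterpartisWorse} supplies a strictly positive gap $\delta \coloneqq \C^{0,*} - \C^* > 0$ (the learning-based control law attains strictly lower optimal cost than its data-independent counterpart, as in the discussion following the assumption). By the previous step there is an $\nMC^0$ such that $\absv{\C(\bm{\vartheta}^{*}_{\nMC}) - \C^*} < \delta$ for all $\nMC \ge \nMC^0$ with probability~$1$, whence $\C(\bm{\vartheta}^{*}_{\nMC}) < \C^* + \delta = \C^{0,*}$ for all such $\nMC$ with probability~$1$, which is the claim. I expect the main obstacle to be the first step: \Cref{theorem:MainResult} as stated controls only the SAA optimal \emph{value} $\C^*_{\nMC}$ and the distance of $\bm{\vartheta}^{*}_{\nMC}$ to $\ParamSp^*$, not the true cost $\C(\bm{\vartheta}^{*}_{\nMC})$ at the approximate solution; closing this gap needs the uniform convergence of \Cref{lemma:CiscontdiffandCMconverges} on the compact set $\tilde{\ParamSp}$ together with the fact that $\bm{\vartheta}^{*}_{\nMC}$ never leaves $\tilde{\ParamSp}$. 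Equivalently, one may use that $\C$ is uniformly continuous on $\tilde{\ParamSp}$ (\Cref{lemma:ContExpCost}, \Cref{lemma:CiscontdiffandCMconverges}) and that $\min_{\bm{\vartheta}^*\in\ParamSp^*}\norm{\bm{\vartheta}^{*}_{\nMC}-\bm{\vartheta}^*}_2\to 0$, with $\C\equiv\C^*$ on $\ParamSp^*$.
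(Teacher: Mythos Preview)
Your argument is correct and is essentially a detailed elaboration of the paper's own proof, which consists of the single line ``This follows directly from \Cref{theorem:MainResult}.'' You make explicit the step the paper leaves implicit: that $\C(\bm{\vartheta}^*_{\nMC})\to\C^*$ almost surely, which you obtain either via the uniform convergence of \Cref{lemma:CiscontdiffandCMconverges} on $\tilde{\ParamSp}$ together with $\bm{\vartheta}^*_{\nMC}\in\tilde{\ParamSp}$, or equivalently via continuity of $\C$ on $\tilde{\ParamSp}$ and the parameter convergence in \Cref{theorem:MainResult}. One minor remark: the inequality in \Cref{assumption:DataIndependentCounterpartisWorse} as printed in the paper reads $\C^{0,*}<\C^*$, which is the reverse of what is needed (and of what the surrounding text and the corollary's conclusion require); you have correctly used the intended direction $\C^*<\C^{0,*}$.
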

	\begin{proof}
		This follows directly from \Cref{theorem:MainResult}.
	\end{proof}
	\section{Numerical example}
	\label{section:Example}
	
	We now illustrate the proposed algorithm using a simple nonlinear trajectory tracking problem. We demonstrate the convergence of the approximate optimal parameters computed by AntLer as the number of samples $\nMC$ grows, and compare the computed parameters and predictions to those obtained without anticipating learning. Furthermore, by preforming Monte Carlo simulations of the true system, we showcase the superior performance of the approximate optimal parameters compared to the case where learning is not anticipated.
	
	The source code of the experiments presented in this section is available at https://git.lsr.ei.tum.de/acapone/antler.
	
	\subsection{System description}
	
	We consider the one-dimensional system
	\begin{align}
	\label{eq:ToyProbDiffEq}
	\xsc_{\step+1} = \fsc(\xaugsc_{\step} ) +  \gsc(\xaugsc_{\step}) + \wsc_{\step},
	\end{align}
	with initial state~$\xsc_0 = 0$, process noise~$\wsc_\step \sim \mathcal{N}(0,\covtoyproblem^2)$, and state transition functions
	 \begin{align}
	\fsc(\xaugsc_{\step}) &= \xsc_{\step} + \uinsc_{\step}, \\
	\gsc(\xaugsc_{\step}) &= \gsctoyprxsc.
	\end{align} 
	We aim to design an online learning-based control law that tracks the trajectory~${\xsc_{\step}^{\text{ref}} = \xdestoyprob}$ as accurately as possible, while simultaneously accounting for any potential tracking errors due to the unknown function $\gsc(\cdot)$. To this end we choose the control law
	\begin{align}
	\label{eq:ControlLawToyProblem}
	{\uinsc_{\step}(\bm{\vartheta}) = -  \mu_\step(\xsc_{\step}) - \vartheta_1( \xsc_{\step} - \vartheta_2 \xsc_{\step}^{\text{ref}}  )},
	\end{align}
	where~$\vartheta_1$ acts as a control gain, and~$\vartheta_2$ scales the reference trajectory and enables to avoid regions of high model uncertainty. The term~$\mu_\step(\xsc_{\step})$ is a GP mean, which is updated online as new data points are collected. We compute~$\mu_\step(\xsc_{\step})$ using the same kernel as for AntLer, which we specify in the sequel. Employing the same kernel both for predictions and control is reasonable, since we assume that it faithfully represents the unknown function~$g(\cdot)$.
	
	We quantify control performance by employing the cost function 
	\begin{align}
	\label{eq:CostFunToyProb}
	\begin{split}
	\C({\bm{\vartheta}}) = &\text{E}_{\xaug_1, \dotsc, \xaug_{\nHtoyprob}} \left[\sum \limits_{{\step}=0}^\nHtoyprob
	c_{\step} \right],
	\end{split}
	\end{align}  
	where the immediate cost terms~$c_{\step}	\coloneqq (\xsc_{\step}-\xsc_{\step}^{\text{ref}})^2$ penalize deviations from the reference trajectory. 
		
	We now describe the kernel used for AntLer predictions and the online learning-based control law \eqref{eq:ControlLawToyProblem}. We assume to know that~$\gsc(\cdot)$ depends only on the state~$\xsc_{\step}$, and that it corresponds to a smooth function. This information is encoded into the GP by employing a squared exponential kernel that takes only the state as argument, i.e.,
	\begin{align}
	\label{eq:SqExpKernel}
	k(\xaugsc_i, \xaugsc_j) \eqqcolon k(\xsc_i, \xsc_j) = \sigma^2_k \exp \left(- \frac{(\xsc_i- \xsc_j)^2}{2 l} \right),
	\end{align}
	where the signal variance~${\sigma_k^2 \in \mathbb{R}_+}$ and length scale~${l \in \mathbb{R}_+}$ are obtained by training the GP using log marginal likelihood optimization~\cite{Rasmussen2006}. To this end, we assume to have~$100$ measurements of~\eqref{eq:ToyProbDiffEq}, which were obtained using a control law that attempts to minimize the distance of the true system~\eqref{eq:ToyProbDiffEq} to the origin.
	Squared exponential kernels are dense within the space of continuous functions on compact sets, i.e., they can approximate any continuous function uniformly and arbitrarily well on compact subsets of~$\StatSp$~\cite{micchelli2006universal}. Moreover, the posterior mean~$\mu_\step(\cdot)$ of a GP obtained with a squared exponential kernel exhibits smooth behavior~\cite{Rasmussen2006}. Hence,~\eqref{eq:SqExpKernel} is an appropriate choice for this setting. 
	
	It can easily be shown that, in a setting where~$\gsc(\xsc_{\step})$ is known, i.e.,~$\mu_\step(\xsc_{\step}) =  \gsc(\xsc_{\step})$, the system trajectory is optimal for~$\vartheta_1 =\vartheta_2=1$. Since the control law ~\eqref{eq:ControlLawToyProblem} learns~$\gsc(\cdot)$ online, it is reasonable to expect that the optimal parameters~$\bm{\vartheta}^*$ for unknown~$\gsc(\cdot)$ lie within a neighborhood of~$\vartheta_1 =\vartheta_2=1$, provided that~$\gsc(\cdot)$ is learned correctly. Hence, we assume that the optimal parameters lie within the compact subset $\tilde{\ParamSp}= [-1,-1] \times[2,2]$. In the following, we employ this assumption to restrict the feasible region of the optimization problem to $\tilde{\ParamSp}$.

	\subsection{Approximate optimal solutions using AntLer}
	\label{subsection:OptimControlUsingSAA}
	
		\begin{figure*}[ht]
			\vspace{0.14cm}
		\centering
		\begin{subfigure}[t]{0.486\textwidth}
			\centering
			\includegraphics[scale=0.285]{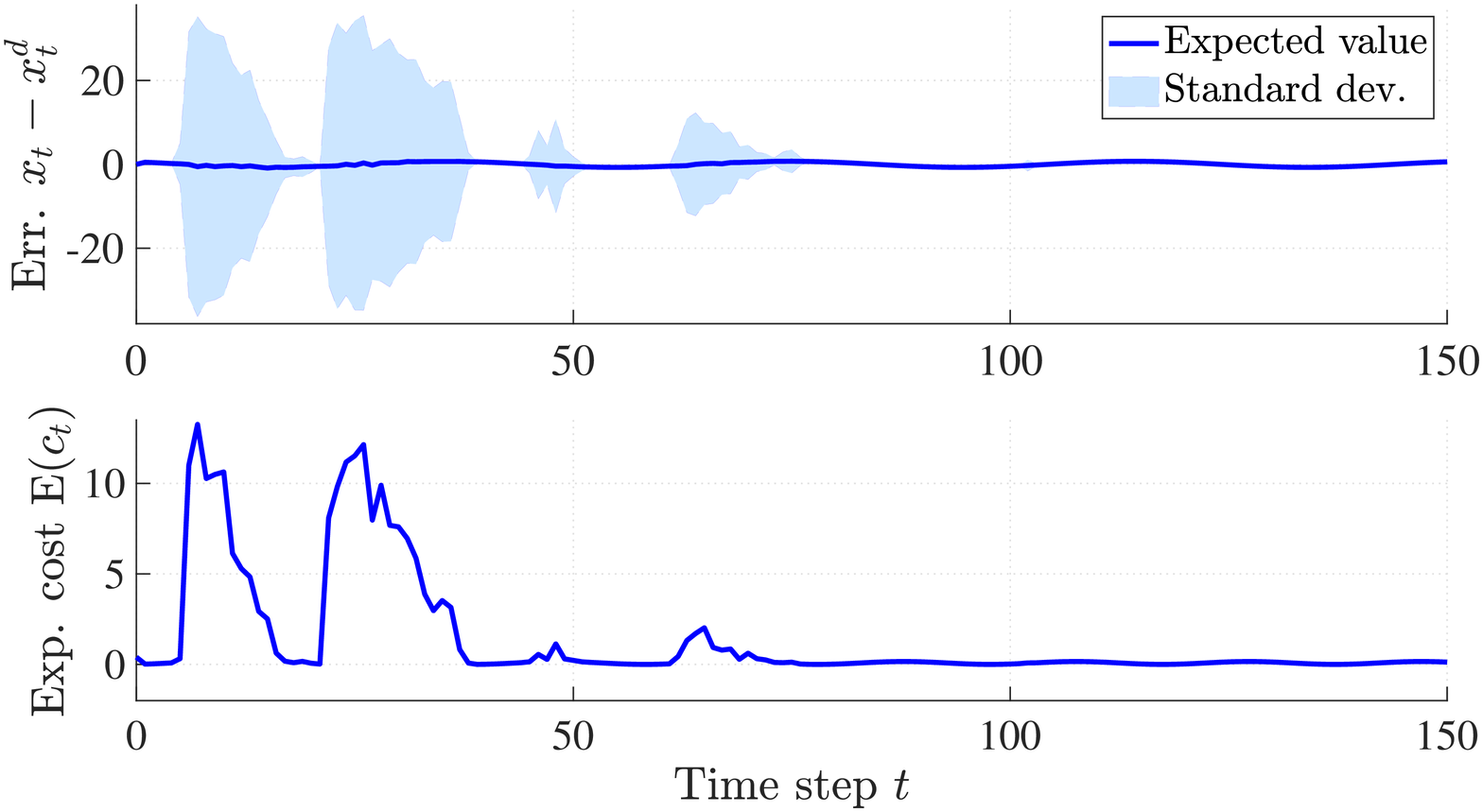}
			\caption{Predicted optimum when learning is anticipated. } 
			\label{fig:ToyProblemPredictionWithLearning}
		\end{subfigure}%
		\hfill
		\begin{subfigure}[t]{0.486\textwidth}
			\centering
			\includegraphics[scale=0.285]{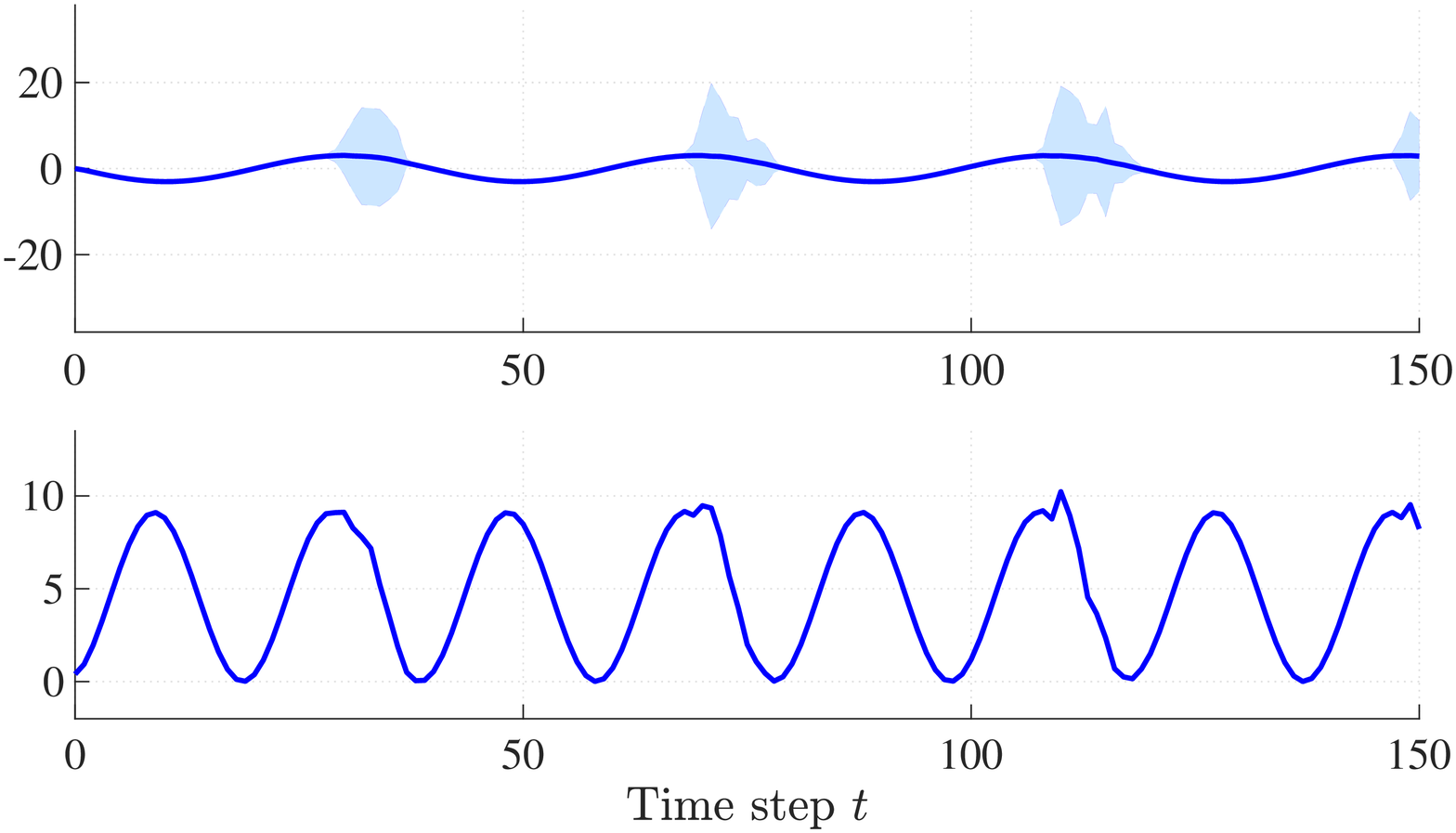}
			\caption{Predicted optimum when learning is not anticipated. }
			\label{fig:ToyProblemPredictionWithoutLearning}
		\end{subfigure}%
		\caption{Predicted optima (\subref{fig:ToyProblemPredictionWithLearning})~with and (\subref{fig:ToyProblemPredictionWithoutLearning})~without anticipating learning. Both predictions are carried out using AntLer and~${\nMC = 200}$ samples. The top rows show the predicted optimal tracking error~$x_t-x_t^{\text{ref}}$, the bottow rows show the expected immediate cost~$\text{E}[c_t]$. (\subref{fig:ToyProblemPredictionWithLearning})~Prediction for approximate optimal online learning-based control law~${\uinsc_{\step}(\bm{\vartheta}_{\nMC}^* )}$, where ${{\bm{\vartheta}_{\nMC}^*=(1, 0.9)\transp}}$; predicted cost is~${{\C}_M(\bm{\vartheta}_{\nMC}^*, \mathcal{Z}_{\nMC})=212}$. (\subref{fig:ToyPWithoutLearning})~Prediction for approximate optimal data-independent counterpart~${{\uinsc^0_{\step}(\bm{\vartheta}^{0,*} )}}$, where~${{\bm{\vartheta}^{0,*}=(0.9, 0.2)\transp}}$; predicted cost is~${{\C}_M(\bm{\vartheta}_{\nMC}^*, \mathcal{Z}_{\nMC}) =731}$.}
		\label{figure:ToyProblemAntlerPredictions}
	\end{figure*} 
	
		We demonstrate the convergence of the approximate optimal solution~$\bm{\vartheta}_{\nMC}^*$ to~$\bm{\vartheta}^*$ as~$\nMC$ grows by applying AntLer using~${\nMC\in\left\{2,10,50,100,200\right\}}$ samples. Additionally, in order to illustrate
	\Cref{corollary:AmBetterthanwithoutAnticipating}, we make predictions and optimize the parameters~$\bm{\vartheta}$ without anticipating learning, i.e., by using the data-independent counterpart of the control law~${\uinsc^0_{\step}(\bm{\vartheta}) =  - \mu_0(\xsc_{\step}) -\vartheta_1(\xsc_{\step} -  \vartheta_2\xsc_{\step}^{\text{ref}}})$. To optimize the parameters of~${\uinsc^0_{\step}(\bm{\vartheta})}$, we employ AntLer with $\nMC=200$. We are able to do so, since~${\uinsc^0(\cdot)}$ is a special case of an online learning-based control law. Hence, approximate optimal parameters can also be obtained using AntLer. For simplicity of exposition, we henceforth refer to~${\bm{\vartheta}^{*,0}}$  as the optimum of the data-independent counterpart.
		
	To solve the SAA problem in AntLer, we employ a gradient-based method with different starting values, which are sampled from the uniform distribution on $\tilde{\ParamSp}$. 
	\begin{table}[b]
		\caption{Approximate optimal parameters~$\bm{\vartheta}^{*}_{\nMC}$ 
			computed by AntLer for different~$\nMC$.}
		\label{table:CasfunctionofM}
		\begin{center}
			\small\addtolength{\tabcolsep}{-3.7pt}
			\begin{tabular}{|c||c|c|c|c|c|}
				\hline\
				$\nMC$ & 2 & 10 & 50 & 100 & 200 \rule{0pt}{2.6ex} \\ [0.5ex]
				\hline \hline
				$\bm{\vartheta}_{\nMC}^*$  & $(
				0.9, 0.9
				)\transp$ & $  (1.1, \ 1)\transp$ & $(1, 0.9)\transp$ & $(1,  0.9)\transp$&$(1,  0.9)\transp$\rule{0pt}{2.8ex}\\ [0.5ex]
				\hline
			\end{tabular}
		\end{center}\vspace{-2mm}
	\end{table}
A solution is found after at most~$17$ gradient-descent steps. In \Cref{table:CasfunctionofM}, we display the approximate optimal parameters~$\bm{\vartheta}^*_M$ 
computed by Antler. In \Cref{fig:ToyProblemPredictionWithLearning}, we present AntLer predictions for~$\nMC=200$ and the approximate optimal online learning-based law~$\uinsc_{\step}(\bm{\vartheta}^{*}_M)$. Furthermore, in \Cref{fig:ToyProblemPredictionWithoutLearning} we show predictions for the optimal data-independent counterpart~$\uinsc^0_{\step}(\bm{\vartheta}^{0,*})$. 
	
	The value of the approximate optimal parameters is~${\bm{\vartheta}_{\nMC}^*\approx (1,0.9)\transp}$ for all ${50<\nMC<200}$. This indicates that~$\bm{\vartheta}_{\nMC}^*$ has converged to a small neighborhood of the optimal parameters~$\bm{\vartheta}^*$, as expected from \Cref{theorem:MainResult}. 
	
	AntLer predicts that, by scaling the reference trajectory with~$\vartheta_2=0.9$, an optimal trade-off is achieved between the information of the collected data and the error caused by model uncertainty. In other words, if the control law were to attempt to fully enforce the reference trajectory, i.e.,~${\vartheta_2 = 1}$, then AntLer predicts that too many measurements need to be collected before good tracking performance is achieved. However, if~${\vartheta_2 = 0.9}$ is chosen, then AntLer predicts that the unknown dynamics will be learned quickly enough to achieve good tracking performance within the time horizon~${\NMPC = 150}$. This becomes apparent in the predictions in \Cref{fig:ToyProblemPredictionWithLearning}. Therein, the variance of the state $\xsc_{\step}$ and the expected immediate cost~$c_{\step}$ under the approximate optimal control law~$\uinsc_{\step}(\bm{\vartheta}_{\nMC}^{*})$ decrease over time. After~$\step \approx 70$, they become approximately zero. 
	
	The parameters~${\bm{\vartheta}^{0,*}=(0.9, 0.2)\transp}$ of the optimal data-independent counterpart~$\uinsc_{\step}(\bm{\vartheta}^{0,*})$ attempt to keep the system close to the origin. 
	This is because predictions for~$\uinsc_{\step}^0(\bm{\vartheta})$ do not anticipate learning. In other words, they only yield low tracking errors in regions where model uncertainty is already low. As measurement data at the origin was collected prior to the control design, model uncertainty is high in the whole state space except for a neighborhood of the origin. Hence the approximate optimal parameters~${\bm{\vartheta}^{0,*}=(0.9, 0.2)\transp}$ attempt to keep the system within this region. This is reflected in the predictions in \Cref{fig:ToyProblemPredictionWithoutLearning}, where the tracking error exhibits little variance compared to \Cref{fig:ToyProblemPredictionWithLearning}. 
	
	For~$\nMC=200$, the predicted cost~${\C(\bm{\vartheta}_{\nMC}^{*})=212}$ under the approximate optimal control law~$\uinsc_{\step}(\bm{\vartheta}_{\nMC}^{*})$ is lower than the predicted cost~$\C(\bm{\vartheta}^{0,*})=731$ under the data-independent counterpart~$\uinsc_{\step}(\bm{\vartheta}^{0,*})$. Assuming that the GP specified by the kernel \eqref{eq:SqExpKernel} correctly captures the model uncertainty due to~$\gsc(\cdot)$, \Cref{corollary:AmBetterthanwithoutAnticipating} implies that control performance will be superior if~$\bm{\vartheta}_{\nMC}^*$ is applied to the true system instead of~$\bm{\vartheta}^{0,*}$. This indeed is the case, as shown in the following. 

	\subsection{Monte Carlo simulations of true system}
	\label{subsect:MCSimofTrueSystem}
	
	\begin{figure*}[t]
		\vspace{0.14cm}
		\centering
		\begin{subfigure}[t]{0.486\textwidth}
			\centering
			\includegraphics[scale=0.285]{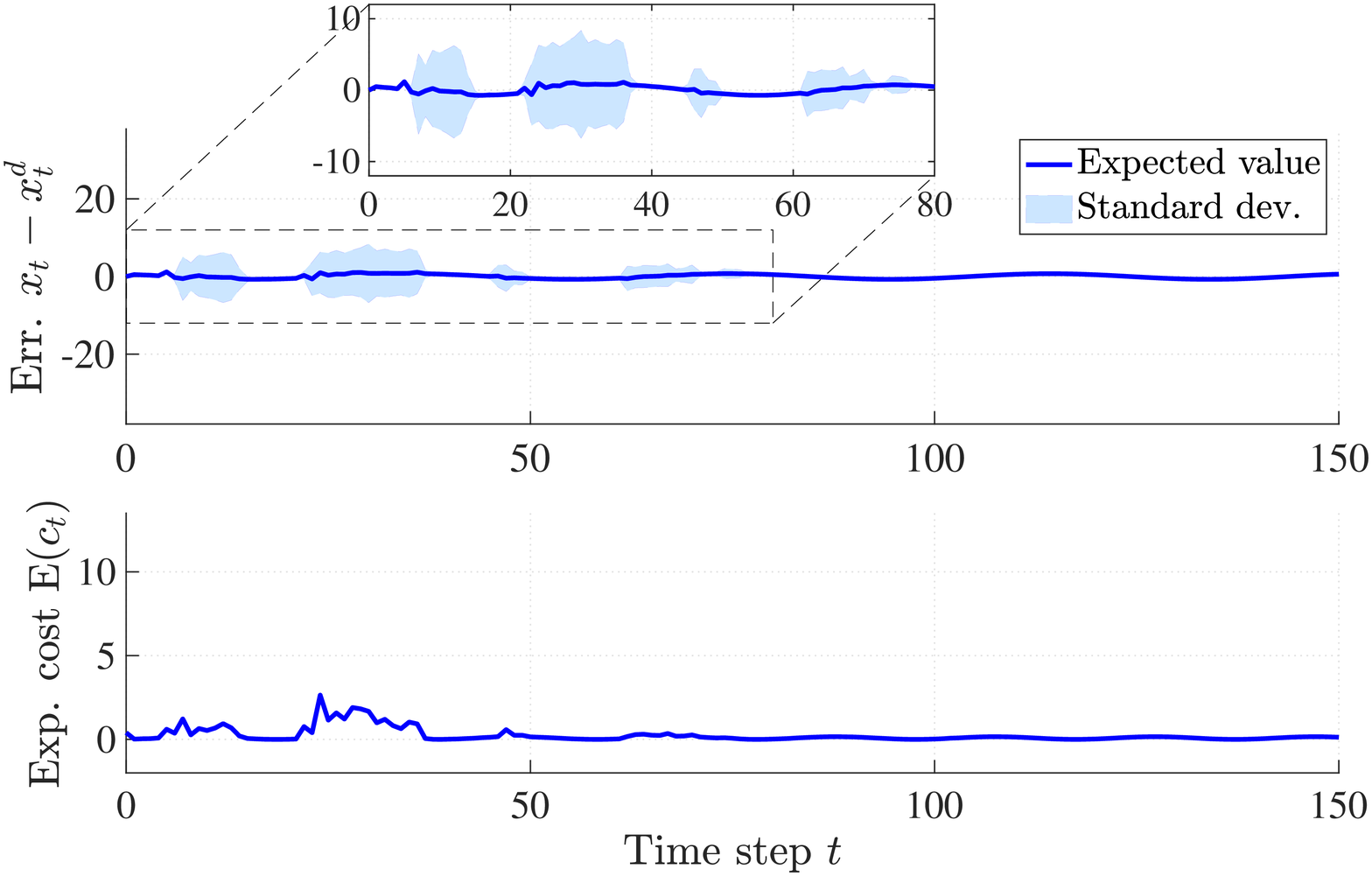}
			\caption{Parameters optimized by anticipating learning. } 
			\label{fig:ToyPWithLearning}
		\end{subfigure}%
		\hfill
		\begin{subfigure}[t]{0.486\textwidth}
			\centering
			\includegraphics[scale=0.285]{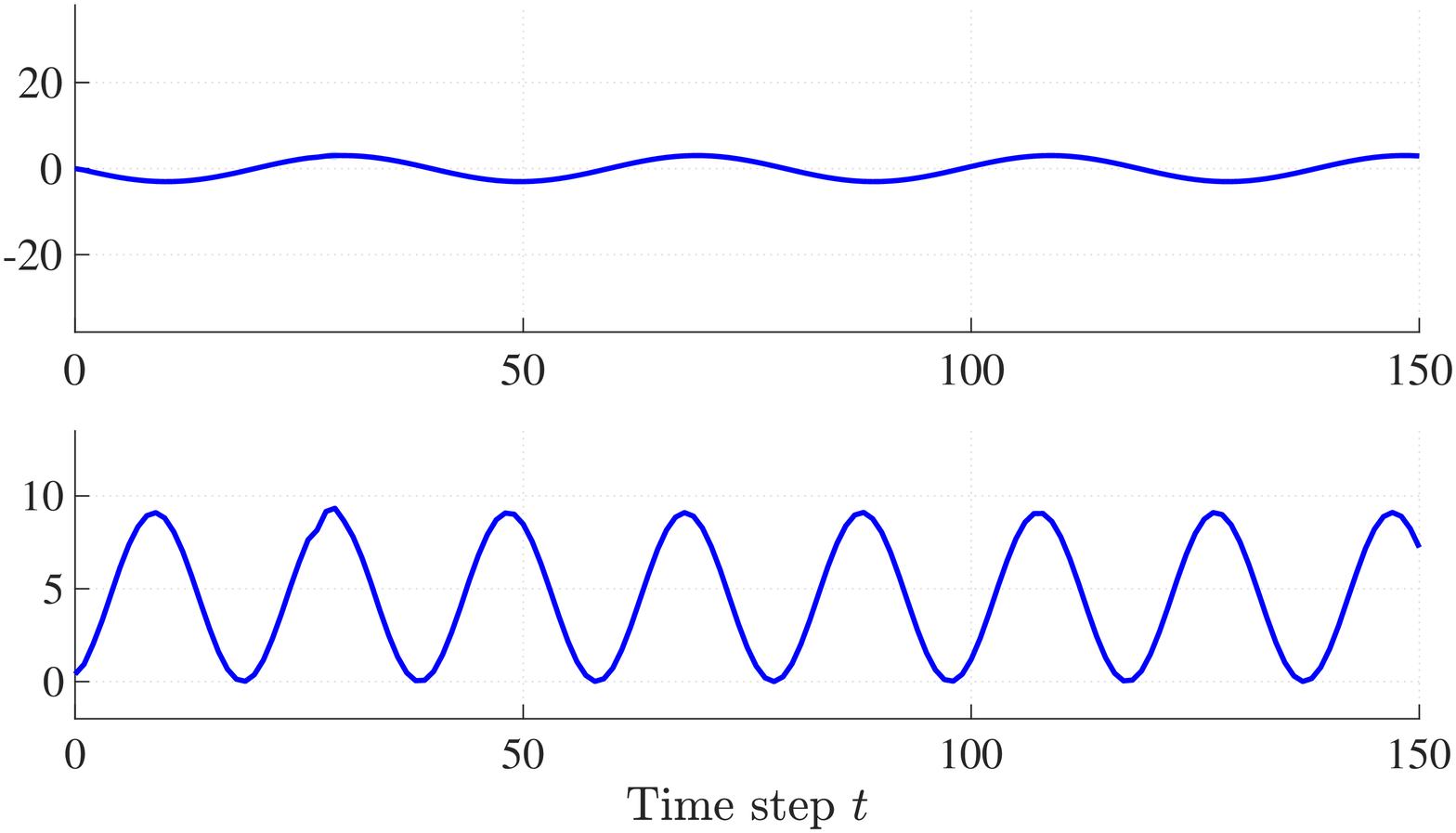}
			\caption{Parameters optimized without anticipating learning.} 
			\label{fig:ToyPWithoutLearning}
		\end{subfigure}
		\caption{Monte Carlo simulations of true system~\eqref{eq:ToyProbDiffEq} consisting of~$100$ runs. The top rows show the tracking error~$x_t-x_t^{\text{ref}}$, the bottom rows show the average immediate cost~$\text{E}[c_t]$. (\subref{fig:ToyProblemPredictionWithLearning})~True system under approximate optimal online learning-based control law~$\uinsc_{\step}(\bm{\vartheta}_{\nMC}^* )$, where~${\bm{\vartheta}_{\nMC}^* = (1, \ 0.9)\transp}$ was obtained by anticipating learning; the average total cost is~$37$. (\subref{fig:ToyPWithoutLearning})~True system under~$\uinsc_{\step}(\bm{\vartheta}^{0,*} )$, where ~${\bm{\vartheta}^{0,*} = (0.9, \ 0.2)\transp}$ was obtained without anticipating learning; the average total cost is~$859$.}
	\end{figure*} 
	
	The parameters~${\bm{\vartheta}^*_M = (1, 0.9)\transp}$ computed by AntLer for~$\nMC=200$ are employed to control the true system~\eqref{eq:ToyProbDiffEq} in~$\nMCtoyprob$ Monte Carlo runs. 
	Moreover, we compare the results to the Monte Carlo simulation using the optimal parameters obtained without anticipating learning~${\bm{\vartheta}^{0,*} = (0.9, 0.2)\transp}$. The respective results are shown in \Cref{fig:ToyPWithLearning} and \Cref{fig:ToyPWithoutLearning}.
	
	As shown in \Cref{fig:ToyPWithLearning}, the variance of the state is high for~$\bm{\vartheta}=\bm{\vartheta}^*_M$ at the beginning of the Monte Carlo simulation. This is due to the initially unknown system dynamics~$\gsc(\cdot)$. After approximately~$\step =80$, enough measurement data has been gathered to adequately track the reference trajectory. Despite differences in overall variance and learning time, the results agree qualitatively with the AntLer prediction shown in \Cref{fig:ToyProblemPredictionWithLearning}, which indicates that the kernel~\eqref{eq:SqExpKernel} was chosen adequately.
	
	For~${\bm{\vartheta}=\bm{\vartheta}^{0,*}}$, the variance of the state is very low throughout the simulation. This is because the parameters~${\bm{\vartheta}=\bm{\vartheta}^{0,*}}$ steer the system to a region of low model uncertainty. These results are in agreement with the predictions presented in \Cref{fig:ToyProblemPredictionWithoutLearning}.
	
The average cumulative cost for~${\bm{\vartheta}= \bm{\vartheta}^{0,*}}$ is~$859$. This is higher than for~${\bm{\vartheta}= \bm{\vartheta}_{\nMC}^{*}}$, which achieves an average cost of~$37$. This was expected from the AntLer predictions and \Cref{corollary:AmBetterthanwithoutAnticipating}.

	\section{Conclusion}
	\label{sect:Conclusion}
	\balance
	We have presented AntLer, a control design approach that anticipates the effect of online learning and optimizes data-independent parameters accordingly. By expressing model uncertainty with a Gaussian process model, we have formulated the parameter optimization problem as a stochastic optimal control problem, which AntLer solves approximately using sample average approximation. We have shown that AntLer approximates an optimal solution arbitrarily accurately with probability one for a sufficiently large number of samples. We have applied AntLer to a nonlinear system. The results have shown that model learning is correctly anticipated, which leads to a better choice of control parameters compared to the case where learning is not anticipated.
	
	In future work, we aim to apply AntLer to complex online learning-based control laws, such as learning-based model predictive control and online reinforcement learning.

	


	
	\bibliographystyle{IEEEtran}
	\bibliography{AllPhDReferences}

\end{document}